\documentclass{article}
\usepackage{iclr2027_conference,times}
\iclrfinalcopy
\usepackage{amsmath,amsfonts,bm}

\def\eqref#1{equation~\ref{#1}}

\def\1{\bm{1}}

\DeclareMathAlphabet{\mathsfit}{\encodingdefault}{\sfdefault}{m}{sl}
\SetMathAlphabet{\mathsfit}{bold}{\encodingdefault}{\sfdefault}{bx}{n}

\newcommand{\Cov}{\mathrm{Cov}}

\usepackage{amsmath,amssymb,amsthm,mathtools}
\usepackage{graphicx}
\usepackage{subcaption}
\usepackage{float}
\usepackage{wrapfig}
\usepackage{booktabs}
\usepackage{tabularx}
\usepackage{xcolor}
\usepackage{microtype}
\usepackage{enumitem}
\usepackage{needspace}
\usepackage{hyperref}
\usepackage{url}
\usepackage{bbm}
\hypersetup{hidelinks}
\usepackage{multirow}
\newcommand{\method}{\textsc{Suture}}

\newcommand{\ind}{\mathbf{1}}

\newtheorem{proposition}{Proposition}
\newtheorem{corollary}{Corollary}
\newtheorem{lemma}{Lemma}
\theoremstyle{remark}

\theoremstyle{plain}

\title{Beyond Scalar IoU: Structured Verification from Rollout Groups for Video Temporal Grounding}

\author{
  Youngjae Cho \\
  PYLER \\
  \And
  Won Young Jhoo \\
  PYLER \\
  \And
  Jongsuk Kim \\
  PYLER
}
\begin{document}
\maketitle

\begin{abstract}

Reinforcement learning with verifiable rewards (RLVR) provides a natural
framework for adapting pretrained models to video temporal grounding, where generated
temporal intervals can be scored directly against ground truth intervals.
Yet existing overlap verifiers typically score each rollout independently, leaving the joint structure of the rollout group unused.
We introduce SUTURE, which conditions verification on the rollout group and exploits its
structure at two complementary scales: disagreement across rollouts controls how
strongly the target is reweighted, while coverage at each position determines
where reward mass is redistributed. We show that the resulting verifier admits an exact
decomposition into the standard IoU term and a covariance correction determined by the
rollout group.
A local gradient diagnostic finds a preference for responses covering
relatively less supported target regions in the analyzed groups.
Across five temporal grounding benchmarks, SUTURE improves grounding
performance at every reported IoU threshold. Its trained policy also shows
less video-start anchoring in reasoning traces: for later events, the first
temporal mention more often overlaps the annotated target.
Together, these results show that the joint structure of a rollout group can support a more informative temporal verifier.

\end{abstract}

\section{Introduction}
\label{sec:intro}

Video temporal grounding requires a model to identify when the event
described by a natural language query occurs
\citep{gao2017tall,krishna2017dense}.
Recent multimodal large language models (MLLMs) tackle this task by
directly generating temporal intervals from video--query inputs
\citep{bai2025qwen25vltechnicalreport,yan2025videochatr}.
These predictions can be scored against ground truth intervals, making
reinforcement learning with verifiable rewards (RLVR) a natural
framework for adapting pretrained models \citep{shao2024deepseekmath,wang2025timer}.
The verifier determines how localization accuracy becomes
a learning signal.

Group-relative optimization samples multiple responses for the same
input and normalizes their rewards within the rollout group
\citep{shao2024deepseekmath}.
Yet temporal overlap verifiers typically score each interval
independently against the target interval using an overlap reward based
on intersection over union (IoU)
\citep{wang2025timer,zheng2026temporalaware}.
\emph{Optimization is group-relative, but overlap verification
remains individual.}
The group is used to compare scalar rewards, while the temporal
support pattern revealed by its predictions is left out of the
overlap verifier.

Figure~\ref{fig:main} (left) illustrates the resulting distinction.
Two predictions can receive the same IoU while covering different
parts of the target interval: one reaches a region covered by most
rollouts, whereas the other reaches an equally large correct region
covered by only a few.
Equal accuracy under IoU need not imply equal training emphasis:
the latter captures correct coverage that is less consistently
represented in the group.
We use this distinction to emphasize relatively less supported
correct regions during training, rather than treating all correctly
covered locations uniformly.

We introduce Structured Under-support Targeting Using Rollout
Evidence (\method{}), a verifier that conditions on the rollout group and separates
\emph{where} to redistribute target weight from \emph{how much} to
redistribute (Figure~\ref{fig:main}).
Among target locations covered by at least one rollout, we use coverage at each
position to give greater relative weight to less supported regions.
Pairwise disagreement between the complete predicted intervals
controls the redistribution strength.
Normalization preserves total target mass, while the IoU
denominator remains unchanged.
The annotation remains the sole source of correctness:
the group changes only the relative emphasis within it.

We show that this verifier decomposes exactly into ordinary IoU
and a covariance correction determined by the rollout group.
This correction can favor predictions covering relatively
less supported target regions even among predictions with the
same IoU.
A separate local gradient diagnostic finds a corresponding preference
for responses covering relatively less supported target regions
in the analyzed groups.

Across five benchmarks covering both short and long videos, \method{} consistently improves temporal grounding performance.
Component ablations show that structured target weighting improves grounding performance. Adapting the reweighting strength to rollout disagreement yields further gains over the tested controls with fixed strength.
Additional analyses show reduced anchoring to the video start in reasoning traces and shorter responses on average.

\begin{figure}[t!]
    \centering
    \includegraphics[width=\linewidth]{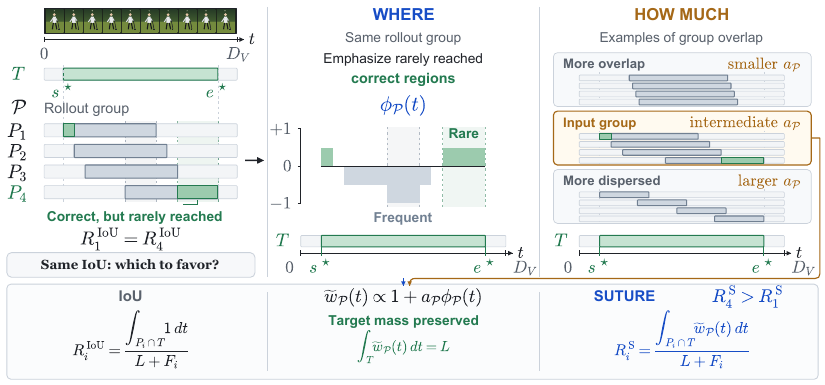}
    \caption{
    \textbf{Target weighting based on rollout groups in \method{}.}
    \textbf{Left:} $P_1$ and $P_4$ have equal IoU but different support over
    their correct overlap.
    \textbf{Center:} The signed residual $\phi_{\mathcal P}(t)$ determines
    where to redistribute target weight.
    \textbf{Right:} Disagreement controls $a_{\mathcal P}$, subject to the floor.
    \textbf{Bottom:} Normalization preserves target mass $L$ and the IoU
    denominator, while the structured weighting yields
    $R_4^{\mathrm S}>R_1^{\mathrm S}$ in this example.
    }
    \label{fig:main}
\end{figure}

Our main contributions are summarized as follows:
\begin{itemize}
    \item \textbf{Verification using rollout groups for temporal grounding.}
    We introduce \method{}, which uses rollout coverage at each position to determine
    where to redistribute target weight and disagreement across the group to set the
    amplitude, while retaining the
    annotation as the source of correctness.
    \item \textbf{Exact characterization and local analysis.}
    We derive an exact reward decomposition and ordering by group support, and
    separately examine the structured residual through
    a local gradient diagnostic.
    \item \textbf{Empirical validation across models, benchmarks,
    and temporal behavior.}
    We demonstrate grounding gains across five benchmarks,
    evaluate the amplitude rule based on disagreement against controls with fixed
    amplitudes, and observe reduced anchoring to the video start in
    reasoning traces.
\end{itemize}

\section{Preliminaries}
\label{sec:prelim}

We introduce the temporal grounding setup, standard overlap verification, and the group-relative optimization quantities used throughout the paper.

\subsection{Temporal Grounding and Overlap Verification}
\label{sec:prelim_grounding}

For a video--query input $x=(V,q)$ with video duration $D_V$, let the ground truth interval be
\[
    \mathcal T=[s^\star,e^\star],
    \qquad
    L=|\mathcal T|=e^\star-s^\star>0.
\]
Here $|A|$ denotes the total duration of a temporal set $A$, so $L$ is the target duration. The policy model $\pi_\theta$ generates a group of $G$ responses.
During rollout collection, responses are sampled from the behavior policy $\pi_{\theta_{\mathrm{old}}}$ as
\begin{equation}
    y_i \sim \pi_{\theta_{\mathrm{old}}}(\cdot\mid x),
    \qquad
    i=1,\ldots,G.
    \label{eq:rollout_group}
\end{equation}
A valid prediction in response $y_i$ defines $P_i=[\hat s_i,\hat e_i]$
with $\hat s_i< \hat e_i$. 
We write
\[
    \mathcal P=(P_1,\ldots,P_G)
\]
for the realized rollout tuple. Throughout, $\mathcal P$ denotes the particular group sampled for the current input and is an ordered tuple rather than a distribution.

For rollout $i$, define its duration of correct overlap and its duration outside the target as
\begin{equation}
\label{eq:IF}
    I_i:=I(P_i)=|P_i\cap\mathcal T|,
    \qquad
    F_i:=F(P_i)=|P_i\setminus\mathcal T|.
\end{equation}
The union $P_i\cup\mathcal T$ consists of the full target $\mathcal T$ and the
predicted portion outside it, $P_i\setminus\mathcal T$. These parts are disjoint, so $|P_i\cup\mathcal T|
    =|\mathcal T|+|P_i\setminus\mathcal T|
    =L+F_i$. 
The standard temporal IoU reward is therefore
\begin{equation}
\label{eq:iou}
    R_i^{\mathrm{IoU}}
    :=
    R^{\mathrm{IoU}}(P_i,\mathcal T)
    =
    \frac{|P_i\cap\mathcal T|}{|P_i\cup\mathcal T|}
    =
    \frac{I_i}{L+F_i}
    =
    \frac{
        \displaystyle\int_{P_i\cap\mathcal T}1\,dt
    }{
        L+F_i
    }.
\end{equation}
For fixed $(P_i,\mathcal T)$, $R_i^{\mathrm{IoU}}$ does not depend on the other rollouts in $\mathcal P$.

\subsection{Group-Relative Policy Optimization}
\label{sec:prelim_grpo}

We use uppercase $R$ for individual reward terms and lowercase $r$ for
the total training reward used to compute the GRPO advantage.
Given rollout rewards $\{r_i\}_{i=1}^{G}$,
GRPO~\citep{shao2024deepseekmath} forms advantages normalized within each
group
\begin{equation}
    \bar r=\frac{1}{G}\sum_{i=1}^{G}r_i,
    \qquad
    \sigma_r=
    \sqrt{\frac{1}{G}\sum_{i=1}^{G}(r_i-\bar r)^2},
\end{equation}
\begin{equation}
\label{eq:grpo_advantage}
    A_i
    =
    \operatorname{sg}\!\left[
        \frac{r_i-\bar r}{\sigma_r+\delta}
    \right],
    \qquad \delta>0,
\end{equation}
where $\operatorname{sg}[\cdot]$ denotes stop-gradient. The full clipped GRPO surrogate is given in Appendix~\ref{app:full_grpo}; the main text retains only the quantities needed for our verifier analysis.

\section{Method}
\label{sec:method}

\subsection{Target weighting from rollout groups}
\label{sec:two_scale}

For rollout $P_i$, \method{} computes the overlap reward
$R_i^{\mathrm S}:=R^{\mathrm S}(P_i,\mathcal T;\mathcal P)$ by
replacing the uniform weighting of the target in IoU:
\begin{equation}
\label{eq:suture_reward}
    R_i^{\mathrm{IoU}}
    =\frac{\displaystyle\int_{P_i\cap\mathcal T}1\,dt}{L+F_i},
    \qquad
    R_i^{\mathrm S}
    =\frac{\displaystyle\int_{P_i\cap\mathcal T}
        \widetilde w_{\mathcal P}(t)\,dt}{L+F_i}.
\end{equation}
IoU counts every correctly covered temporal location equally;
\method{} weights these locations using rollout group support.
The shared weight field $\widetilde w_{\mathcal P}$ is obtained by normalizing $1+a_{\mathcal P}\phi_{\mathcal P}(t)$ over the target to preserve total mass $L$.
The support residual $\phi_{\mathcal P}(t)$ determines the reweighting pattern from coverage at each position, while the amplitude $a_{\mathcal P}$ controls its strength based on rollout disagreement.
Each rollout integrates this field over its own correct overlap $P_i\cap\mathcal T$, with the IoU denominator unchanged.

\paragraph{Where to reweight: group support at each position.}
The fraction of rollouts covering target location $t$ is
\begin{equation}
\label{eq:coverage}
    C_{\mathcal P}(t)
    =
    \frac{1}{G}
    \sum_{i=1}^{G}\mathbbm{1}[t\in P_i],
    \qquad t\in\mathcal T.
\end{equation}

We contrast the fraction of rollouts that do not cover a target
location with the fraction that do:
$(1-C_{\mathcal P}(t))-C_{\mathcal P}(t)
=1-2C_{\mathcal P}(t)$.
We apply this contrast only at target locations covered by at
least one rollout, defining the support residual as
\begin{equation}
\label{eq:residual}
    \phi_{\mathcal P}(t)
    =
    \mathbbm{1}[C_{\mathcal P}(t)>0]\,
    \bigl(1-2C_{\mathcal P}(t)\bigr),
\end{equation}
where $\mathbbm{1}(\cdot)$ is indicator function. Among covered target locations, lower coverage gives a larger
residual.
An uncovered location contributes to no member's correct overlap.
We leave its residual at zero, retaining a raw weight of one
before normalization of target mass.

\paragraph{How strongly to reweight: rollout disagreement.}
We measure disagreement using overlap between complete predicted
intervals. Their pairwise interval IoU is $\operatorname{pIoU}(P_i,P_j) = \frac{|P_i\cap P_j|}{|P_i\cup P_j|}$.
Averaging over pairs gives one agreement value for the entire group:
\begin{equation}
\label{eq:avg_pair_iou}
    \operatorname{AvgPairIoU}(\mathcal P)
    =
    \frac{2}{G(G-1)}
    \sum_{1\le i<j\le G}
    \operatorname{pIoU}(P_i,P_j).
\end{equation}
The raw disagreement and its floored redistribution amplitude are
\begin{equation}
\label{eq:amplitude}
    u_{\mathcal P}
    :=1-\operatorname{AvgPairIoU}(\mathcal P),
    \qquad
    a_{\mathcal P}
    =\max\{u_{\mathcal P},u_{\mathrm{floor}}\}.
\end{equation}
The amplitude follows $u_{\mathcal P}$ above the floor and otherwise remains
at $u_{\mathrm{floor}}$.

\paragraph{Combining location and strength.}
We normalize the combined field $1+a_{\mathcal P}\phi_{\mathcal P}(t)$
over the target:
\begin{equation}
\label{eq:niw_weight}
    \widetilde w_{\mathcal P}(t)
    =
    \frac{
        L\bigl[1+a_{\mathcal P}\phi_{\mathcal P}(t)\bigr]
    }{
        \displaystyle\int_{\mathcal T}
        \bigl[1+a_{\mathcal P}\phi_{\mathcal P}(t')\bigr]\,dt'
    },
    \qquad
    \int_{\mathcal T}\widetilde w_{\mathcal P}(t)\,dt
    =
    L.
\end{equation}
This redistributes weight within the target while keeping its total mass at $L$.
Without the floor, a small $u_{\mathcal P}$ would make
$\widetilde w_{\mathcal P}$ close to uniform IoU weighting even if
$\phi_{\mathcal P}$ still varies across the target. The floor keeps a minimum
amplitude for this remaining difference in support.
Throughout, we take $G\ge2$ and $0\le u_{\mathrm{floor}}<1$. On this constraint,
$1+a_{\mathcal P}\phi_{\mathcal P}(t)$ is strictly positive within each
target segment of nonzero duration between consecutive interval boundaries
(proved in Lemma~\ref{lem:positivity}), so the normalization is well defined.

This completes the construction of the overlap reward conditioned on the rollout group
in Equation~\ref{eq:suture_reward}. The next section characterizes how
this reward differs from ordinary IoU and how the difference enters
the group-relative update.

\subsection{Theoretical Characterization}
\label{sec:theory}

We first characterize how group support changes each prediction's
reward relative to IoU. We then examine how this reward correction
enters the group-relative gradient.
Full proofs and additional properties are provided in
Appendices~\ref{app:theory_characterization} and~\ref{app:proofs}.

\paragraph{Exact reward correction.}
Let $U\sim\mathrm{Unif}(\mathcal T)$ denote a uniformly distributed
temporal location within the target.

\begin{proposition}[Exact refinement using rollout groups]
\label{prop:refinement}
For a fixed realized rollout group $\mathcal P$, let
\[
    \bar\phi_{\mathcal P}
    =
    \frac{1}{L}\int_{\mathcal T}\phi_{\mathcal P}(t)\,dt,
    \qquad
    \beta_{\mathcal P}
    =
    \frac{a_{\mathcal P}}
         {1+a_{\mathcal P}\bar\phi_{\mathcal P}}
    \ge 0,
\]
and, for member $P_i$, define
\begin{equation}
\label{eq:xi_member}
    \xi_i
    :=
    \frac{
        L\,\Cov\!\left(
            \phi_{\mathcal P}(U),\ind[U\in P_i]
        \right)
    }{
        L+F_i
    }.
\end{equation}
Then the normalized target weight field satisfies
\begin{equation}
\label{eq:pointwise_refinement}
    \widetilde w_{\mathcal P}(t)-1
    =
    \beta_{\mathcal P}
    \bigl(\phi_{\mathcal P}(t)-\bar\phi_{\mathcal P}\bigr),
\end{equation}
and the verifier admits the exact decomposition
\begin{equation}
\label{eq:main_refinement}
    R_i^{\mathrm S}
    =
    R_i^{\mathrm{IoU}}
    +
    \beta_{\mathcal P}\xi_i.
\end{equation}
\end{proposition}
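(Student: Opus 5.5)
The plan is to compute the normalizing constant explicitly, read off the pointwise identity, and then integrate that identity over each member's correct overlap.

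\textbf{Step 1 (normalizer).} By linearity and the definition of $\bar\phi_{\mathcal P}$,
\[
\int_{\mathcal T}\bigl[1+a_{\mathcal P}\phi_{\mathcal P}(t')\bigr]\,dt' = L\bigl(1+a_{\mathcal P}\bar\phi_{\mathcal P}\bigr).
\]
Lemma~\ref{lem:positivity} gives strict positivity of the integrand on every target segment of nonzero duration. Hence this quantity is positive, which justifies dividing by it. Substituting into \eqref{eq:niw_weight} gives
\[
\widetilde w_{\mathcal P}(t)=\frac{1+a_{\mathcal P}\phi_{\mathcal P}(t)}{1+a_{\mathcal P}\bar\phi_{\mathcal P}}.
\]

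\textbf{Step 2 (pointwise identity).} Subtract $1$ to get
\[
\widetilde w_{\mathcal P}(t)-1=\frac{a_{\mathcal P}\bigl(\phi_{\mathcal P}(t)-\bar\phi_{\mathcal P}\bigr)}{1+a_{\mathcal P}\bar\phi_{\mathcal P}},
\]
which is \eqref{eq:pointwise_refinement}. The claim $\beta_{\mathcal P}\ge0$ follows from $a_{\mathcal P}\ge u_{\mathrm{floor}}\ge0$ together with the positive denominator from Step 1.

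\textbf{Step 3 (decomposition).} Write $\widetilde w_{\mathcal P}=1+(\widetilde w_{\mathcal P}-1)$ inside the numerator of \eqref{eq:suture_reward}. The constant $1$ reproduces $R_i^{\mathrm{IoU}}$ from \eqref{eq:iou}. The remaining term is
\[
\frac{\beta_{\mathcal P}}{L+F_i}\int_{P_i\cap\mathcal T}\bigl(\phi_{\mathcal P}(t)-\bar\phi_{\mathcal P}\bigr)\,dt.
\]
With $U\sim\mathrm{Unif}(\mathcal T)$, rewrite the integral as
\[
L\,\mathbb E\bigl[(\phi_{\mathcal P}(U)-\mathbb E\phi_{\mathcal P}(U))\,\ind[U\in P_i]\bigr],
\]
using $\mathbb E\phi_{\mathcal P}(U)=\bar\phi_{\mathcal P}$. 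This expectation is exactly $\Cov(\phi_{\mathcal P}(U),\ind[U\in P_i])$, because centering one factor suffices for a covariance. Therefore the correction equals $\beta_{\mathcal P}\xi_i$, and \eqref{eq:main_refinement} follows.

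\textbf{Main obstacle.} The only nontrivial point is the well-definedness and positivity of $1+a_{\mathcal P}\bar\phi_{\mathcal P}$, and this is handled by Lemma~\ref{lem:positivity}. For a direct check: $\phi_{\mathcal P}\ge 1-2=-1$, with equality only where $C_{\mathcal P}=1$, and $a_{\mathcal P}\le1$. So $1+a_{\mathcal P}\phi_{\mathcal P}\ge0$ everywhere, with equality only on a set where $a_{\mathcal P}=1$ and every rollout covers $t$. If every rollout covers a segment of positive length, then all pairwise overlaps are positive, so $\operatorname{AvgPairIoU}>0$ and $a_{\mathcal P}<1$ (using $u_{\mathrm{floor}}<1$). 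The integrand is therefore strictly positive except on a null set, and the integral is positive.

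Everything else is linearity of the integral and the integral-as-expectation identity. Measurability is not an issue, since $\phi_{\mathcal P}$ is piecewise constant on the segments between interval endpoints.
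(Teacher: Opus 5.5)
Your proposal is correct and follows essentially the same route as the paper. You compute the normalizer as $L(1+a_{\mathcal P}\bar\phi_{\mathcal P})$, which is positive by Lemma~\ref{lem:positivity}, read off the centered pointwise identity, and identify the leftover term with $L\,\Cov(\phi_{\mathcal P}(U),\ind[U\in P_i])$. The only cosmetic difference is that you integrate the pointwise identity over $P_i\cap\mathcal T$, whereas the paper cross-multiplies the raw masses $I_i+a_{\mathcal P}M_i$ and $L+a_{\mathcal P}B_{\mathcal P}$; both reach the same quantity $M_i-I_i\bar\phi_{\mathcal P}$.
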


Normalization of target mass centers the support residual around its
mean over the target (Equation~\ref{eq:pointwise_refinement}). The
coefficient $\beta_{\mathcal P}$ is determined by the amplitude and
normalization; it introduces no additional parameter. The covariance in
Equation~\ref{eq:xi_member} compares the residual with individual
coverage over temporal locations in $\mathcal T$, holding the group fixed.
For $I_i>0$, it is positive precisely when the mean residual over
$P_i\cap\mathcal T$ exceeds $\bar\phi_{\mathcal P}$.

\paragraph{Ordering by group support.}
Consider two members of the same realized group with
$a_{\mathcal P}>0$, equal positive overlap durations within the target, and equal
durations outside it. Their IoU scores are equal, but
\method{} gives a higher structured overlap reward to the member whose correct overlap
has lower average group coverage
(Appendix~\ref{app:theory_characterization}). The coefficient
$\beta_{\mathcal P}$ scales this difference.

For a fixed residual field, $\beta_{\mathcal P}$ increases with
the amplitude $a_{\mathcal P}$, since
$\partial\beta_{\mathcal P}/\partial a_{\mathcal P}
=(1+a_{\mathcal P}\bar\phi_{\mathcal P})^{-2}>0$.
A smaller $a_{\mathcal P}$ therefore reduces the correction relative
to IoU, recovered at $a_{\mathcal P}=0$.
When IoU and the support correction favor different members,
$\beta_{\mathcal P}$ determines their relative contributions to the
reward difference.

\paragraph{Effect on the GRPO gradient.}
We next examine whether the reward correction only rescales the
IoU gradient or can also change its direction.
For fixed rollouts and policy parameters, the verifier affects
the advantages normalized within each group, not the rollout score vectors.
At the behavior policy, each score vector is the gradient of the
log-likelihood normalized by response length:
\begin{equation}
    \mathbf z_i
    =
    \frac{1}{|y_i|}
    \sum_{t=1}^{|y_i|}
    \left.
    \nabla_{\theta}
    \log
    \pi_{\theta}(y_{i,t}\mid x,y_{i,<t})
    \right|_{\theta=\theta_{\mathrm{old}}},
    \label{eq:rollout_score}
\end{equation}
With clipping locally inactive, the realized direction determined by the reward is
\begin{equation}
    \mathbf d(r)
    =
    \frac{1}{G}
    \sum_{i=1}^{G}
    A_i\mathbf z_i.
    \label{eq:local_reward_update}
\end{equation}
We omit the KL regularizer, which is unchanged by the verifier.

The total training rewards $r_i^{\mathrm{IoU}}$ and $r_i^{\mathrm S}$
combine the respective temporal scores with the same format reward.
For the gradient analysis, we write their difference as
$\chi_i:=r_i^{\mathrm S}-r_i^{\mathrm{IoU}}
=\beta_{\mathcal P}\xi_i$
by Proposition~\ref{prop:refinement}.
Define the group mean
$\bar\chi=G^{-1}\sum_{i=1}^{G}\chi_i$
and centered correction
$\widetilde\chi_i=\chi_i-\bar\chi$.

\begin{samepage}
Let
$\mathbf d_{\mathrm S}:=\mathbf d(r^{\mathrm S})$ and
$\mathbf d_{\mathrm{IoU}}:=\mathbf d(r^{\mathrm{IoU}})$.
At the behavior policy, with locally inactive clipping,
stop-gradient advantages, and $\delta>0$, Equation~\ref{eq:local_reward_update} gives
\begin{equation}
\label{eq:gradient_decomp_body}
    \mathbf d_{\mathrm S}
    =
    \kappa_{\mathcal P}\mathbf d_{\mathrm{IoU}}
    +
    \alpha_{\mathcal P}\mathbf d_{\mathrm{corr}},
    \qquad
    \mathbf d_{\mathrm{corr}}
    =
    \frac{1}{G}\sum_{i=1}^{G}
    \widetilde\chi_i\mathbf z_i,
\end{equation}
with
\[
    \kappa_{\mathcal P}
    =
    \frac{\sigma_{\mathrm{IoU}}+\delta}
         {\sigma_{\mathrm S}+\delta},
    \qquad
    \alpha_{\mathcal P}
    =
    \frac{1}{\sigma_{\mathrm S}+\delta}>0.
\]
Here $\sigma_{\mathrm{IoU}}$ and $\sigma_{\mathrm S}$ are the
standard deviations of the corresponding total training rewards within the group. Thus
$\kappa_{\mathcal P}$ rescales the IoU gradient to
reflect the change in reward standard deviation.
The centered correction contributes an additional component that
can change the gradient direction.
\end{samepage}

Section~\ref{sec:amplitude_ablation} compares fixed and adaptive
amplitudes; Section~\ref{sec:update_geometry} probes local response
log-likelihood changes with rollouts and model parameters held fixed.

\section{Experiments}
\label{sec:experiments}

\subsection{Experimental settings}

\paragraph{Benchmarks.}
We evaluate on five temporal grounding benchmarks.
QVHighlights \citep{lei2021detecting} pairs vlog and news videos
with natural language queries and annotated relevant moments.
ActivityNet Captions \citep{krishna2017dense} provides event
descriptions aligned with temporal intervals, while Charades-STA
\citep{gao2017tall} pairs activity descriptions with their
corresponding video segments.
We additionally evaluate on Ego4D-NLQ \citep{Grauman_2022_CVPR},
which requires locating segments that answer questions about
the camera wearer's experience, and TACoS
\citep{regneri-etal-2013-grounding}, which aligns descriptions of
cooking actions with video segments.
\paragraph{Evaluation metrics.}
We report top-1 recall at IoU thresholds $m\in\{0.3,0.5,0.7\}$,
denoted by R1@$m$.
This metric measures the percentage of queries whose top-ranked
predicted interval achieves an IoU above $m$ with the annotated target.
\paragraph{Training setup.}
The controlled verifier experiments and TaRO reproduction use the training
framework of TaRO \citep{zheng2026temporalaware}. We reproduce Time-R1 separately
following its original training settings \citep{wang2025timer}.
The primary experiments use Qwen2.5-VL-7B-Instruct
\citep{bai2025qwen25vltechnicalreport} with $G=8$ rollouts per prompt;
\method{} uses $u_{\mathrm{floor}}=0.3$.
We additionally compare TaRO and \method{} with
Qwen3-VL-8B-Instruct \citep{bai2025qwen3vltechnicalreport}.
Training and evaluation details are provided in
Appendix~\ref{app:training_details}, and sensitivity to group size
is reported in Appendix~\ref{app:group_size}.
\paragraph{Reward variants.}
The Naive IoU control, fixed-amplitude controls, and \method{} use
the same training settings and binary format reward, differing only
in the overlap verifier.
$R_i^{\mathrm{format}}$ is 1 if the response consists of one
\texttt{<think>} block followed by one \texttt{<answer>} block and 0 otherwise;
it checks structure only.
The Naive IoU control uses
$r_i^{\mathrm{IoU}} = R_i^{\mathrm{format}} + R_i^{\mathrm{IoU}}$.
For context, the reward formulations of
Time-R1~\citep{wang2025timer}, TaRO, and SUTURE are
\begin{equation}
\begin{aligned}
r_i^{\mathrm{Time\text{-}R1}}
&=
R_i^{\mathrm{format}}
+
R_i^{\mathrm{tIoU}},
\\
r_i^{\mathrm{TaRO}}
&=
R_i^{\mathrm{format}}
+
R_i^{\mathrm{tIoU}}
+
R_i^{\mathrm{temp}}
\ind[\operatorname{IoU}_i > \tau],
\\
r_i^{\mathrm S}
&=
R_i^{\mathrm{format}}
+
R_i^{\mathrm S}.
\end{aligned}
\end{equation}
Here, $R_i^{\mathrm{tIoU}}$ denotes the timestamp-aware IoU reward used by
Time-R1 and TaRO, and $R_i^{\mathrm{temp}}$ denotes TaRO's temporal-reasoning reward.
For the fixed-amplitude controls, we replace $a_{\mathcal P}$ in
$R_i^{\mathrm S}$ with a constant while keeping the remaining verifier construction unchanged.
Definitions of the Time-R1 and TaRO reward terms are provided in
Appendix~\ref{sec:related_work}.

\begin{table}[h]
    \caption{\textbf{Direct comparison across two MLLM backbones.} All numbers are
    R1 at the stated IoU threshold. The broader baseline comparison is provided in
    Appendix~\ref{app:full_benchmark_results}.}
    \label{tab:main_results}
    \centering
    \small
        \resizebox{\linewidth}{!}{%
        \begin{tabular}{lccccccccc}
        \toprule
        & \multicolumn{3}{c}{QVHighlights} & \multicolumn{3}{c}{ActivityNet} & \multicolumn{3}{c}{Charades-STA} \\
        \cmidrule(lr){2-4}\cmidrule(lr){5-7}\cmidrule(lr){8-10}
        Method & R1@0.3 & R1@0.5 & R1@0.7 & R1@0.3 & R1@0.5 & R1@0.7 & R1@0.3 & R1@0.5 & R1@0.7 \\
        \midrule
        \multicolumn{10}{l}{\textit{Qwen2.5-VL-7B-Instruct as base model}} \\
        Qwen2.5-VL-7B-Instruct & 15.9 & 7.1  & 4.2  & 24.4 & 13.6 & 6.7  & 72.5 & 53.6 & 28.5 \\
        VideoChat-R1.5   & 71.4  & 55.8 & 38.4 & 52.4 & 32.3 & 16.8 & --   & --   & --   \\
        Time-R1          & 80.3  & 66.2 & 44.8 & 58.6 & 39.0 & 21.4 & 78.1 & 60.8 & 35.3 \\
        TaRO             & 81.7  & 66.8 & 45.3 & 61.0 & 39.7 & 20.5 & 79.3 & 63.3 & 36.7 \\
        \method{} (ours) & \textbf{84.1} & \textbf{70.1} & \textbf{50.6} & \textbf{66.8} & \textbf{47.1} & \textbf{26.2} & \textbf{81.6} & \textbf{65.1} & \textbf{37.6} \\
        \midrule
        \multicolumn{10}{l}{\textit{Qwen3-VL-8B-Instruct as base model}} \\
        TaRO             & 83.2 & 68.7 & 51.7 & 58.0 & 39.8 & 23.9 & 82.9 & 67.0 & 37.7 \\
        \method{} (ours) & \textbf{84.3} & \textbf{70.0} & \textbf{54.4} & \textbf{63.0} & \textbf{44.5} & \textbf{27.3} & \textbf{83.6} & \textbf{67.4} & \textbf{39.9} \\
        \bottomrule
        \end{tabular}}
\end{table}
\subsection{Temporal grounding results}
Table~\ref{tab:main_results} presents the results on
QVHighlights \citep{lei2021detecting}, ActivityNet Captions
\citep{krishna2017dense}, and Charades-STA \citep{gao2017tall}.
We omit VideoChat-R1.5 \citep{yan2025videochatr} results on
Charades-STA because Charades data were included in its training. \method{} outperforms previous methods at every reported IoU threshold on all three benchmarks with both backbones.
The largest absolute gains occur on ActivityNet Captions,
where R1@0.5 improves by $7.4$ and $4.7$ percentage points
with Qwen2.5-VL-7B-Instruct and Qwen3-VL-8B-Instruct,
respectively.
The gains at R1@0.7 indicate that more predictions achieve
close temporal overlap with the annotated event, rather than
the improvement being confined to coarse localization.
A separate three-run comparison also yields higher mean
mIoU and R1 at all three thresholds on these benchmarks
(Appendix~\ref{app:training_runs}).

\begingroup
\setlength{\intextsep}{6pt}
\begin{wraptable}{r}{0.5\textwidth}
    \captionsetup{font=small,skip=4pt}
    \caption{\textbf{Temporal grounding in long videos.}
    Both methods use Qwen3-VL-8B-Instruct as the base model.
    Values are R1 at the stated IoU thresholds.}
    \label{tab:long_video_results}
    \centering
    \small
    \setlength{\tabcolsep}{3pt}
    \resizebox{\linewidth}{!}{
    \begin{tabular}{llccc}
        \toprule
        Dataset & Method & R1@0.3 & R1@0.5 & R1@0.7 \\
        \midrule
        \multirow{2}{*}{Ego4D-NLQ}
            & TaRO & 13.0 & 7.7 & 3.8 \\
            & \method{} (ours)
            & \textbf{13.5} & \textbf{8.1} & \textbf{4.3} \\
        \midrule
        \multirow{2}{*}{TACoS}
            & TaRO & 27.4 & 17.3 & 7.2 \\
            & \method{} (ours)
            & \textbf{28.4} & \textbf{17.7} & \textbf{7.6} \\
        \bottomrule
    \end{tabular}}
\end{wraptable}

\paragraph{Benchmarks with long videos.}
We further examine whether the benefits of \method{} extend
to long-video grounding on Ego4D-NLQ
\citep{Grauman_2022_CVPR} and TACoS
\citep{regneri-etal-2013-grounding}.
Both methods use Qwen3-VL-8B-Instruct as the base model.
As shown in Table~\ref{tab:long_video_results}, \method{}
achieves higher recall than TaRO at all three IoU thresholds
on both benchmarks.
Although the absolute gains are modest, these results support
the applicability of \method{} to the evaluated long-video
settings.
\par\WFclear
\endgroup

\raggedbottom
\subsection{Component analysis of verification using rollout groups}
\label{sec:amplitude_ablation}

We examine two design choices: weighting target regions using
rollout-group support, and adapting the strength of this weighting
to rollout disagreement.
Table~\ref{tab:amplitude_ablation} compares Naive IoU,
fixed-amplitude structured verification, and adaptive \method{}
under the same training settings.
Results are averaged equally over QVHighlights and Charades-STA.

\paragraph{Target weighting based on rollout groups.}
The fixed controls retain target weighting based on rollout-group
support but apply the same amplitude to every group.
The weighting pattern therefore still depends on the sampled
predictions, even though its amplitude is constant.
All tested constants improve over Naive IoU on the four
benchmark-averaged metrics.
These results support the utility of group-conditioned target
weighting even without adapting its amplitude to disagreement.

\paragraph{Adapting the amplitude to disagreement.}
We then compare the adaptive rule
$a_{\mathcal P}=\max\{u_{\mathcal P},0.3\}$
with fixed-amplitude controls at $a\in\{0.3,0.4,0.8\}$,
retaining the same rules for constructing support and
normalizing target mass.
The $a=0.3$ control matches the default lower bound
$u_{\mathrm{floor}}=0.3$, while $a=0.4$ matches the observed
mean of the adaptive amplitude $a_{\mathcal P}$ during training. We also include $a=0.8$, selected for its strong performance
among the tested fixed-amplitude settings.
Appendix~\ref{app:component_behavior} uses the mean-matched $a=0.4$ control
for the behavioral comparison.
\method{} achieves higher scores than all three fixed controls
on the four benchmark averaged metrics.
Neither matching the observed mean amplitude nor using the
larger constant $a=0.8$ reproduces the adaptive result.
This supports using rollout disagreement to set the amplitude
rather than applying one constant across groups.

\begin{table}[h]
\caption{\textbf{Verifier and amplitude ablation.}
Values are averages with equal weight over QVHighlights and Charades-STA,
reported as percentages rounded to one decimal place.}
\label{tab:amplitude_ablation}
\centering
\small
\begin{tabular}{lcccc}
\toprule
Verifier / amplitude rule & mIoU & R1@0.3 & R1@0.5 & R1@0.7 \\
\midrule
Naive IoU (uniform target) & 54.8 & 78.6 & 61.9 & 37.2 \\
Fixed $a=0.3$ (floor-matched) & 56.1 & 79.1 & 63.3 & 39.4 \\
Fixed $a=0.4$ (mean-matched)  & 57.3 & 80.9 & 65.4 & 41.2 \\
Fixed $a=0.8$        & 57.5 & 81.1 & 65.3 & 41.6 \\
\textbf{Adaptive \method{}} & \textbf{59.2} & \textbf{82.9} & \textbf{67.6} & \textbf{44.1} \\
\bottomrule
\end{tabular}
\end{table}

Benchmark-level ablation results and sensitivity to the
disagreement floor are provided in
Appendices~\ref{app:constant_amplitude} and~\ref{app:ufloor}.
Furthermore, Appendix~\ref{app:component_behavior} examines how the gains from fixed and
adaptive verification vary with event start time.

\subsection{Temporal behavior and generated response length}
\label{sec:onset_tracking}

\begingroup
\setlength{\intextsep}{6pt}
\setlength{\textfloatsep}{12pt}
\setlength{\floatsep}{8pt}
\captionsetup[figure]{font=small,skip=4pt}

\paragraph{Tracking event start times.}
The aggregate grounding results do not reveal how the learned policies differ
in their temporal behavior. We therefore examine the first well-formed
temporal span in the reasoning trace before the final answer
(the first temporal mention) and compare its location with the annotated target.
Figure~\ref{fig:onset_tracking} compares \method{}, TaRO, and Naive IoU.

\begin{figure}[h]
    \centering
    \begin{subfigure}[t]{0.34\linewidth}
        \centering
        \includegraphics[width=\linewidth]{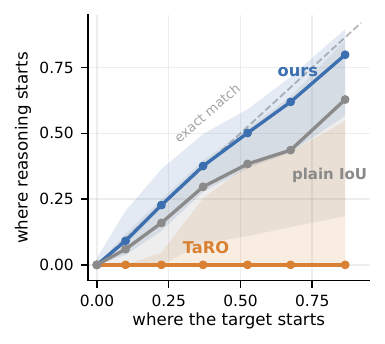}
        \caption{Tracking event start time.}
    \end{subfigure}\hspace{0.07\linewidth}
    \begin{subfigure}[t]{0.34\linewidth}
        \centering
        \includegraphics[width=\linewidth]{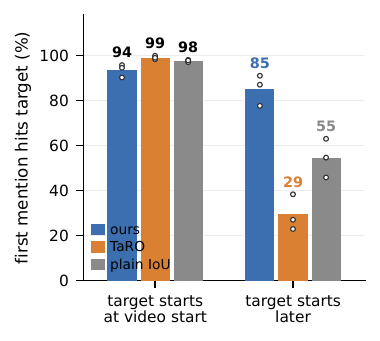}
        \caption{First-mention target-hit rate.}
    \end{subfigure}
    \caption{\textbf{Temporal behavior across trained policies.}
    \textbf{(a)} On ActivityNet, the normalized start of the first temporal
    mention is plotted against the normalized event start time; the dashed
    diagonal denotes exact tracking.
    \textbf{(b)} First-mention target-hit rates for events starting at the video
    start or later. Bars average QVHighlights, ActivityNet Captions, and
    Charades-STA equally; dots show individual benchmarks.}
    \label{fig:onset_tracking}
\end{figure}

As the target occurs later, \method{} shifts its first temporal mention
accordingly, while TaRO remains concentrated near the video start.
Naive IoU follows the same overall trend as \method{} but is biased toward
earlier positions. All three methods attain high target-hit rates when the
event begins at the video start; for later events, \method{} largely preserves
its hit rate while the comparison policies decline.
Figure~\ref{fig:qualitative-main} illustrates this difference in two selected examples.

\begin{figure}[h]
    \centering
    \captionsetup[subfigure]{justification=centering,singlelinecheck=true}
    \begin{subfigure}[b]{0.49\linewidth}
        \centering
        \includegraphics[width=\linewidth]{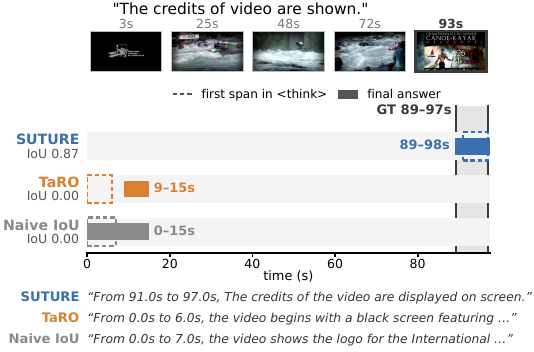}
        \caption{Video credits.}
    \end{subfigure}\hfill
    \begin{subfigure}[b]{0.49\linewidth}
        \centering
        \includegraphics[width=\linewidth]{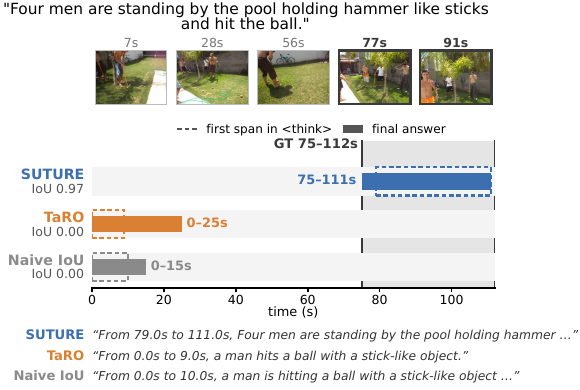}
        \caption{Activity by the pool.}
    \end{subfigure}
    \caption{\textbf{Later events with similar opening scenes.}
    Dashed outlines mark first temporal mentions and solid bars mark final
    answers.
    These selected ActivityNet examples and the selection rule are detailed in
    Appendix~\ref{app:qualitative_activitynet}.}
    \label{fig:qualitative-main}
\end{figure}

\begin{wraptable}{r}{0.5\textwidth}
    \captionsetup{font=small,skip=4pt}
    \caption{\textbf{Mean reasoning-trace length} in tokens.}
    \label{tab:reasoning_length_main}
    \centering
    \small
    \setlength{\tabcolsep}{3pt}
    \resizebox{0.9\linewidth}{!}{
    \begin{tabular}{lrrr}
        \toprule
        Benchmark & Naive IoU & TaRO & \method{} \\
        \midrule
        Charades-STA & 170.7 & 124.7 & \textbf{77.6} \\
        QVHighlights & 423.2 & 167.2 & \textbf{119.9} \\
        ActivityNet & 420.8 & 170.3 & \textbf{123.6} \\
        \bottomrule
    \end{tabular}}
\end{wraptable}

\paragraph{Generated response length.}
We also compare output length with TaRO using the
Qwen2.5-VL-7B-Instruct evaluation outputs for
Table~\ref{tab:main_results}.
\method{} reduces mean output length by
$31.6\%$ on Charades-STA, $24.2\%$ on
QVHighlights, and $23.5\%$ on ActivityNet.

The reduction is concentrated in the reasoning trace
(Table~\ref{tab:reasoning_length_main}); mean final answers are about one
token longer than TaRO's. The reward has no explicit length penalty.
These are generated-token counts, not latency measurements.
Appendix~\ref{app:response_length} reports the full single-run statistics.
\par\WFclear

\subsection{Local geometry of the structured residual}
\label{sec:update_geometry}

Reward ordering alone does not determine how response likelihoods
change under a parameter update.
We therefore measure each response's directional derivative of
log-likelihood normalized by response length along the diagnostic IoU and
residual directions.
Support alignment measures whether responses covering relatively
less supported target segments have larger local likelihood responses. The
residual-sign preference gap compares local likelihood responses
between rollouts with positive and negative centered reward corrections.

We analyze $35$ selected groups from severe Naive IoU failures:
$16$ recovered and $19$ unrecovered, classified retrospectively
by final \method{} outcomes. Reporting both tests whether the
preference extends to unrecovered groups; it does not establish
a cause of recovery.

We hold rollouts from the first visit and model parameters fixed. Both directions
use score vectors from the language model head and the last two decoder layers,
normalized by response length. This local diagnostic does not
reconstruct the complete training update. Appendix~\ref{app:update_geometry_metrics} details
group selection, diagnostic definitions, and uncertainty estimation.

\begin{figure}[h]
    \centering
    \begin{subfigure}[t]{0.38\linewidth}
        \centering
        \includegraphics[width=\linewidth]
            {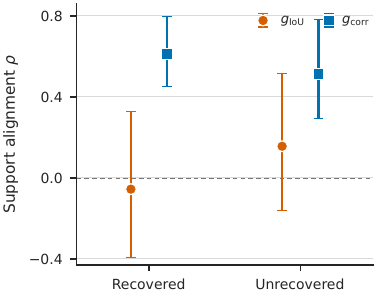}
        \caption{Temporal support alignment.}
        \label{fig:prop2-support}
    \end{subfigure}\hspace{0.06\linewidth}
    \begin{subfigure}[t]{0.38\linewidth}
        \centering
        \includegraphics[width=\linewidth]
            {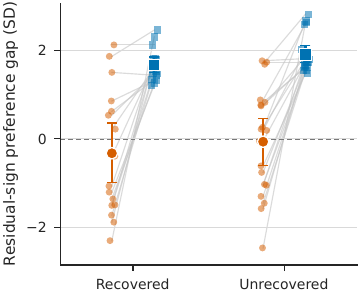}
        \caption{Residual-sign preference gap.}
        \label{fig:prop2-preference}
    \end{subfigure}
    \caption{\textbf{Residual gradient diagnostic.}
    \textbf{(a)} Pooled Spearman correlation between temporal support
    deficits and the mean directional responses of rollouts covering
    each segment.
    \textbf{(b)} Mean within-group standardized response difference
    between rollouts with positive and negative centered reward corrections.
    Error bars denote 95\% bootstrap intervals with rollout groups as clusters;
    definitions are in Appendix~\ref{app:update_geometry_metrics}.}
    \label{fig:prop2_semantics}
\end{figure}

\paragraph{Local response preference.}
The residual direction shows positive support alignment in
recovered ($\rho=0.614$) and unrecovered ($\rho=0.513$) groups,
compared with $-0.056$ and $0.156$ for the diagnostic IoU direction
(Figure~\ref{fig:prop2-support}).
Its mean standardized residual-sign preference gaps are $1.664$
and $1.903$
(Figure~\ref{fig:prop2-preference}).
For both metrics, the 95\% bootstrap intervals for the contrast between the residual
and IoU directions lie above zero in each case, using rollout groups as clusters.
Thus, the response preference associated with group support appears in both
recovered and unrecovered groups.
\endgroup

\section{Conclusion}

We study temporal RLVR through verifier design. \method{} uses coverage across
the realized rollout group to reweight overlap within the annotated target,
while disagreement sets the reweighting amplitude above a floor. The annotation remains the
sole source of correctness. The resulting reward decomposes into IoU and a
group-conditioned covariance correction, which favors less-supported correct
overlap when IoU statistics are matched.

Under shared training settings, fixed structured weighting improves over
Naive IoU, and the adaptive rule outperforms the tested constant amplitudes.
A local gradient diagnostic finds a support-dependent preference in the
selected groups, including those that do not recover. For later events,
the first temporal mention in \method{}-trained reasoning traces is less
often anchored to the video start. Grounding improves at every reported IoU threshold
across five benchmarks, with gains on the three primary benchmarks also
observed with a second MLLM backbone. These results show that the rollout
group can inform verification as well as reward normalization.

\newpage
\section*{Generative AI Use Disclosure}

Generative AI tools were used solely to improve the readability, grammar, and wording of the manuscript. They were not used to develop the research methodology, formulate hypotheses or theoretical claims, design experiments, interpret results, or generate scientific conclusions. All revisions were reviewed and approved by the authors, who take full responsibility for the final content of the paper.
\bibliography{references}
\bibliographystyle{iclr2027_conference}

\clearpage
\appendix
\pdfbookmark[0]{Appendix}{appendix.overview}
\setcounter{figure}{0}
\renewcommand{\thefigure}{\Alph{figure}}
\renewcommand{\theHfigure}{appendix.\arabic{figure}}
\setcounter{table}{0}
\renewcommand{\thetable}{\Alph{table}}
\renewcommand{\theHtable}{appendix.\arabic{table}} %
\begingroup
\raggedbottom
\captionsetup{font=small,labelfont=bf,skip=6pt}
\captionsetup[subfigure]{font=small,labelfont=normalfont,
  justification=raggedright,singlelinecheck=false,skip=4pt}
\renewcommand{\arraystretch}{1.1}
\setlength{\textfloatsep}{12pt plus 2pt minus 2pt}
\setlength{\intextsep}{10pt plus 2pt minus 2pt}
\let\appendixsection\section
\renewcommand{\section}{\needspace{7\baselineskip}\appendixsection}
\let\appendixsubsection\subsection
\renewcommand{\subsection}{\needspace{5\baselineskip}\appendixsubsection}

\section{Related Work}
\label{sec:related_work}

\paragraph{MLLMs for video temporal grounding.}
Standard temporal grounding benchmarks include QVHighlights
\citep{lei2021detecting}, ActivityNet Captions
\citep{krishna2017dense}, and Charades-STA
\citep{gao2017tall}. Recent work has increasingly
formulated video temporal grounding as direct generation with multimodal large
language models. ChatVTG
\citep{Qu_2024_CVPR} performs zero-shot grounding by matching language queries
against segment descriptions at multiple granularities, while TimeChat
\citep{ren2024timechattimesensitivemultimodallarge} introduces timestamp-aware representations for
understanding of long videos. HawkEye \citep{wang2024hawkeye} trains a text-to-text
video LLM on captions for video segments and negative spans from InternVid-G, using
coarse labels for temporal regions followed by recursive grounding to progressively
narrow the predicted interval. VTimeLLM \citep{huang2024vtimellm} develops
boundary-aware training for fine-grained temporal reasoning, and TimeSuite
\citep{zeng2025timesuite} incorporates explicit grounding supervision into
instruction tuning on long videos. VideoChat-Flash \citep{li2026videochatflash}
uses hierarchical compression of video tokens and training from short to long videos for
efficient long-context video modeling. More recent approaches further model
temporal structure explicitly. TRACE \citep{guo2025trace} represents videos as
causal event sequences, while VideoChat-R1.5 \citep{yan2025videochatr} uses
visual test-time scaling to iteratively refine attention over high-confidence
spatiotemporal regions. Foresee-to-Ground
\citep{zheng2026foreseetoground} restructures prediction as identification followed by measurement
using explicit temporal evidence, while One-to-Many Temporal Grounding
\citep{xu2026onetomany} extends the task from a single interval to multiple
disjoint occurrences of an event. These works primarily improve temporal
representation, supervision, task formulation, or model architecture; our
focus is instead on how temporal predictions are verified during adaptation with reinforcement learning.

\paragraph{Reinforcement learning for temporal grounding.}
Time-R1 \citep{wang2025timer} introduces RLVR for temporal video grounding
and optimizes generated intervals using a timestamp-aware IoU reward,
\begin{equation}
\label{eq:rw_timer1}
R_i^{\mathrm{tIoU}}
=
\operatorname{IoU}(P_i,\mathcal T)
\left(1-\frac{|\hat s_i-s^\star|}{D_V}\right)
\left(1-\frac{|\hat e_i-e^\star|}{D_V}\right),
\end{equation}
combined with a binary format reward as
$r_i^{\mathrm{Time\text{-}R1}}=R_i^{\mathrm{format}}+R_i^{\mathrm{tIoU}}$.
Its TimeRFT curriculum further selects moderately difficult examples and
progressively focuses optimization on harder grounding instances.

TaRO \citep{zheng2026temporalaware} shifts attention toward the quality of the
intermediate temporal reasoning. For rollout $i$, it measures the decrease in
reasoning log-likelihood after perturbing frames around the ground truth interval
boundaries,
\begin{equation}
    d_i = p_i-q_i,
    \qquad
    R_i^{\mathrm{temp}}
    =
    \alpha\,\ind[d_i>\bar d],
\end{equation}
and applies this term only when the predicted interval exceeds an IoU
threshold,
\begin{equation}
\label{eq:rw_taro}
r_i^{\mathrm{TaRO}}
=
R_i^{\mathrm{format}}
+
R_i^{\mathrm{tIoU}}
+
R_i^{\mathrm{temp}}
\ind[\operatorname{IoU}_i>\tau].
\end{equation}
TaRO therefore introduces group-relative supervision for temporal reasoning,
whereas its grounding accuracy term continues to score the predicted interval
against the ground truth interval. Our focus is different: \method{} conditions the
\emph{temporal overlap verifier itself} on the joint geometry of the realized
rollout group, using the group to determine how reward is distributed within
the target.

\section{Optimization Preliminaries}
\label{app:full_grpo}

For completeness, we give the full clipped GRPO objective underlying the
compact formulation in Section~\ref{sec:prelim_grpo}. Given the advantages $A_i$
normalized within each group in Equation~\ref{eq:grpo_advantage}, the
policy ratio for token $t$ of rollout $y_i$ is
\begin{equation}
    \rho_{i,t}(\theta)
    =
    \frac{
        \pi_{\theta}(y_{i,t}\mid x,y_{i,<t})
    }{
        \pi_{\theta_{\mathrm{old}}}(y_{i,t}\mid x,y_{i,<t})
    }.
    \label{eq:grpo_ratio}
\end{equation}
For rewards assigned to complete responses, the same advantage $A_i$ is applied to all tokens
in rollout $i$. Following the clipped policy ratio construction of PPO
\citep{schulman2017ppo}, the GRPO surrogate is
\begin{align}
\mathcal{J}_{\mathrm{GRPO}}(\theta)
=
\frac{1}{G}\sum_{i=1}^{G}
\frac{1}{|y_i|}
\sum_{t=1}^{|y_i|}
\Big[
&
\min\big(
    \rho_{i,t}(\theta) A_i,\,
    \operatorname{clip}(
        \rho_{i,t}(\theta),
        1-\epsilon,
        1+\epsilon
    )A_i
\big)
\nonumber\\
&-
\lambda_{\mathrm{KL}}
D_{\mathrm{KL},i,t}(\theta)
\Big],
\label{eq:grpo_objective}
\end{align}
where $\epsilon$ is the clipping radius, $\lambda_{\mathrm{KL}}$ controls KL
regularization, and $D_{\mathrm{KL},i,t}$ denotes the KL penalty for each token
with respect to the reference policy.

At the behavior policy $\theta=\theta_{\mathrm{old}}$,
$\rho_{i,t}(\theta_{\mathrm{old}})=1$. When clipping is locally inactive, the part of
the realized update determined by the reward reduces to the rollout scores
$\mathbf z_i$ and direction $\mathbf d(r)$ defined in
Equations~\ref{eq:rollout_score} and~\ref{eq:local_reward_update}. The KL term
remains a separate objective component and is unchanged by the verifier.

\section{Extended Theoretical Characterization}
\label{app:theory_characterization}

This section collects secondary identities and formal consequences supporting
the characterization in the main text. We characterize the magnitude of the
variation of the normalized weight field, state the induced ordering by group
support formally, and record the normalization identity underlying the update analysis.

\subsection{Refinement of the normalized field and structured variation}
The parallel definitions in Equations~\ref{eq:iou} and~\ref{eq:suture_reward}
give, for member $P_i$,
\begin{equation}
\label{eq:integrand_correction}
R_i^{\mathrm S}-R_i^{\mathrm{IoU}}
=
\frac{\displaystyle\int_{P_i\cap\mathcal T}
\bigl(\widetilde w_{\mathcal P}(t)-1\bigr)\,dt}
{L+F_i}.
\end{equation}
Proposition~\ref{prop:refinement} further shows that the normalized field is a
centered perturbation of uniform IoU weighting through
Equation~\ref{eq:pointwise_refinement}.

We measure the RMS magnitude of this structured field relative to uniform IoU
weighting by
\begin{equation}
\label{eq:scorr}
S_{\mathrm{corr}}(\mathcal P)
:=
\left[
\frac{1}{L}\int_{\mathcal T}
\bigl(\widetilde w_{\mathcal P}(t)-1\bigr)^2dt
\right]^{1/2}.
\end{equation}
Because $L^{-1}\int_{\mathcal T}\widetilde w_{\mathcal P}(t)dt=1$,
Equation~\ref{eq:pointwise_refinement} gives
\begin{equation}
\label{eq:scorr_factorization}
S_{\mathrm{corr}}(\mathcal P)
=
\beta_{\mathcal P}
\operatorname{Std}\!\left[\phi_{\mathcal P}(U)\right],
\qquad U\sim\mathrm{Unif}(\mathcal T).
\end{equation}
Thus, if the realized residual is constant almost everywhere within the target,
the normalized field is uniform and \method{} recovers ordinary IoU for every
prediction. More generally, $S_{\mathrm{corr}}$ measures the structured
variation available in the verifier field; the correction received by member
$P_i$ additionally depends on its alignment with that field through $\xi_i$.
Section~\ref{app:structured_activation} examines this variation in realized
training groups.

\subsection{Ordering by group support}
For a scored member $P_i$ with $I_i>0$, define the average realized group
coverage over its correctly covered target region as
\begin{equation}
\label{eq:avg_scored_coverage}
\bar C_i
:=
\frac{1}{I_i}
\int_{P_i\cap\mathcal T}C_{\mathcal P}(t)\,dt.
\end{equation}

\begin{corollary}[Ordering by group support]
\label{cor:support_order}
For a realized group with $a_{\mathcal P}>0$, consider two scored members
$P_i$ and $P_j$ with $I_i=I_j>0$ and $F_i=F_j$. If
$\bar C_i<\bar C_j$, then $R_i^{\mathrm S}>R_j^{\mathrm S}$.
\end{corollary}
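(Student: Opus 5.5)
Since $I_i=I_j$ and $F_i=F_j$, the two members have equal IoU. Proposition~\ref{prop:refinement} then gives
\[
R_i^{\mathrm S}-R_j^{\mathrm S}=\frac{\int_{P_i\cap\mathcal T}(\widetilde w_{\mathcal P}(t)-1)\,dt-\int_{P_j\cap\mathcal T}(\widetilde w_{\mathcal P}(t)-1)\,dt}{L+F_i}.
\]
Substituting Equation~\ref{eq:pointwise_refinement}, each integral becomes $\beta_{\mathcal P}\bigl(\int_{P_k\cap\mathcal T}\phi_{\mathcal P}\,dt-\bar\phi_{\mathcal P}I_k\bigr)$. Because $I_i=I_j$, the $\bar\phi_{\mathcal P}$ terms cancel, and the difference reduces to
\[
\frac{\beta_{\mathcal P}}{L+F_i}\left(\int_{P_i\cap\mathcal T}\phi_{\mathcal P}\,dt-\int_{P_j\cap\mathcal T}\phi_{\mathcal P}\,dt\right).
\]
So the proof has two parts: show $\beta_{\mathcal P}>0$, and show the bracketed difference of $\phi$-integrals is positive.

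For the bracket, every $t\in P_k\cap\mathcal T$ is covered by $P_k$ itself. Hence $C_{\mathcal P}(t)\ge 1/G>0$ there, the indicator in Equation~\ref{eq:residual} equals one, and $\phi_{\mathcal P}(t)=1-2C_{\mathcal P}(t)$ on the scored overlap. Therefore
\[
\int_{P_k\cap\mathcal T}\phi_{\mathcal P}\,dt=I_k\bigl(1-2\bar C_k\bigr)
\]
by Equation~\ref{eq:avg_scored_coverage}. The bracket equals $2I_i(\bar C_j-\bar C_i)$, which is strictly positive by hypothesis.

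The main obstacle is strict positivity of $\beta_{\mathcal P}=a_{\mathcal P}/(1+a_{\mathcal P}\bar\phi_{\mathcal P})$. The numerator is positive because $a_{\mathcal P}>0$. For the denominator, I would use the positivity lemma (Lemma~\ref{lem:positivity}): the integrand $1+a_{\mathcal P}\phi_{\mathcal P}$ is strictly positive on each nondegenerate target segment, so $L(1+a_{\mathcal P}\bar\phi_{\mathcal P})=\int_{\mathcal T}(1+a_{\mathcal P}\phi_{\mathcal P})\,dt>0$.

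If I want to avoid relying on that lemma, a direct bound works. We have $\phi_{\mathcal P}\ge -1$. Also $a_{\mathcal P}\le\max\{1,u_{\mathrm{floor}}\}\le 1$, since $u_{\mathcal P}\le 1$ and $u_{\mathrm{floor}}<1$, so $1+a_{\mathcal P}\phi_{\mathcal P}\ge 0$. Equality would require both $a_{\mathcal P}=1$ and $\phi_{\mathcal P}=-1$, i.e.\ full coverage by every rollout. But $a_{\mathcal P}=1$ forces $\operatorname{AvgPairIoU}=0$, so no two intervals overlap in positive measure, contradicting full coverage on a positive-measure set when $G\ge 2$. Hence the integral is strictly positive.

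Combining $\beta_{\mathcal P}>0$ with the positive bracket gives
\[
R_i^{\mathrm S}-R_j^{\mathrm S}=\frac{2\beta_{\mathcal P}I_i(\bar C_j-\bar C_i)}{L+F_i}>0.
\]
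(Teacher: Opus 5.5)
Your proof is correct and follows essentially the same route as the paper: you cancel the IoU terms and the $\bar\phi_{\mathcal P}$ baseline using $I_i=I_j$ and $F_i=F_j$, evaluate $\int_{P_k\cap\mathcal T}\phi_{\mathcal P}\,dt=I_k(1-2\bar C_k)$ because each scored member covers its own overlap, and use $\beta_{\mathcal P}>0$ from the positivity lemma. Your direct bound on the normalizing denominator is essentially the contrapositive of the lemma's full-coverage case, and it is a valid self-contained substitute for citing the lemma.
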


This formalizes the preference of the structured overlap component illustrated in
Figure~\ref{fig:main}. Under matched aggregate overlap statistics, the local
support field determines the ordering: correctly covering a relatively
less supported portion of the target receives the larger structured
score, while $\beta_{\mathcal P}$ controls the size of this reward difference. The
proof is given in Appendix~\ref{app:proofs}.

\subsection{Group-relative normalization identity}
For the controlled plain IoU comparison, let
$r_i^{\mathrm S}=r_i^{\mathrm{IoU}}+\chi_i$ and
$\widetilde\chi_i=\chi_i-\bar\chi$, with $\kappa_{\mathcal P}$ and
$\alpha_{\mathcal P}$ defined in Section~\ref{sec:theory}. Let
$A_i^{\mathrm{IoU}}$ and $A_i^{\mathrm S}$ denote the corresponding
advantages normalized within each group. Centering the two implemented rewards gives
the exact identity
\begin{equation}
\label{eq:advantage_decomp}
A_i^{\mathrm S}
=
\kappa_{\mathcal P}A_i^{\mathrm{IoU}}
+
\alpha_{\mathcal P}\widetilde\chi_i.
\end{equation}
Multiplying Equation~\ref{eq:advantage_decomp} by $\mathbf z_i$ and averaging
over the fixed rollout group yields Equation~\ref{eq:gradient_decomp_body}.
The identity permits degenerate cases: a correction shared by the whole group centers to
zero, a nonconstant correction may cancel in parameter space, and a nonzero
$\mathbf d_{\mathrm{corr}}$ may still be collinear with the IoU direction.
Section~\ref{sec:update_geometry} examines the realized geometry empirically.

\section{Proofs and Basic Properties}
\label{app:proofs}

We establish positivity of the normalized field, prove the exact reward
refinement and ordering by group support, and derive the structured update
component used in the main text.

\subsection{Well-definedness and basic properties}
\label{app:well_defined}
\begin{lemma}[Positivity of the normalized field]
\label{lem:positivity}
On the parameter domain $G\ge2$ and $0\le u_{\mathrm{floor}}<1$,
$1+a_{\mathcal P}\phi_{\mathcal P}(t)>0$ on every open target segment of
positive length between consecutive interval boundaries. Hence the denominator
in Equation~\ref{eq:niw_weight} is positive and
$\widetilde w_{\mathcal P}(t)$ is well defined.
\end{lemma}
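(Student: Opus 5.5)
The plan is a pointwise bound followed by integration. The key fact is that $\phi_{\mathcal P}$ can never drop to $-1$ on a region of positive length, because the minimum value $-1$ is attained only where every rollout covers $t$.

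\textbf{Step 1: bound the amplitude.} By definition $a_{\mathcal P}=\max\{u_{\mathcal P},u_{\mathrm{floor}}\}$, with $u_{\mathcal P}=1-\operatorname{AvgPairIoU}(\mathcal P)$. Each $\operatorname{pIoU}(P_i,P_j)$ lies in $[0,1]$, so $0\le u_{\mathcal P}\le 1$. Since $u_{\mathrm{floor}}<1$, it follows that $0\le a_{\mathcal P}\le 1$. Here $G\ge2$ is needed so that the pairwise average in Equation~\ref{eq:avg_pair_iou} is defined.

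\textbf{Step 2: bound the residual pointwise.} The boundaries $\hat s_i,\hat e_i,s^\star,e^\star$ split $\mathcal T$ into finitely many open segments. On each segment the membership indicators $\mathbbm{1}[t\in P_i]$ are constant, so $C_{\mathcal P}$ and $\phi_{\mathcal P}$ are constant there as well. There are two cases.
\begin{itemize}
\item If $C_{\mathcal P}=0$ on the segment, then $\phi_{\mathcal P}=0$ and $1+a_{\mathcal P}\phi_{\mathcal P}=1>0$.
\item If $C_{\mathcal P}>0$, then $\phi_{\mathcal P}=1-2C_{\mathcal P}\in[-1,1)$, so $1+a_{\mathcal P}\phi_{\mathcal P}\ge 1-a_{\mathcal P}$.
\end{itemize}

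\textbf{Step 3: the boundary case.} This is the main obstacle, because $1-a_{\mathcal P}$ can equal zero. That happens only if $a_{\mathcal P}=1$, and since $u_{\mathrm{floor}}<1$ this forces $u_{\mathcal P}=1$. Then $\operatorname{AvgPairIoU}(\mathcal P)=0$, so every pair satisfies $|P_i\cap P_j|=0$ and the intervals pairwise overlap only in measure zero.

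Suppose a segment of positive length had $\phi_{\mathcal P}=-1$. Then $C_{\mathcal P}=1$ there, so every $P_i$ contains the segment. With $G\ge2$, this gives $|P_1\cap P_2|>0$, which is a contradiction.

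Hence on any segment of positive length, either $\phi_{\mathcal P}>-1$, giving $1+a_{\mathcal P}\phi_{\mathcal P}>1-a_{\mathcal P}\ge 0$, or $a_{\mathcal P}<1$, giving $1+a_{\mathcal P}\phi_{\mathcal P}\ge 1-a_{\mathcal P}>0$. In fact, when $a_{\mathcal P}=1$ each covered point is covered by exactly one rollout almost everywhere, so $\phi_{\mathcal P}=1-2/G\ge 0$ and the integrand is at least $1$.

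\textbf{Step 4: integrate.} The integrand is strictly positive on every open segment, and these segments have total length $L>0$; the finitely many boundary points have measure zero. Therefore $\int_{\mathcal T}[1+a_{\mathcal P}\phi_{\mathcal P}(t')]\,dt'>0$. This makes $\widetilde w_{\mathcal P}$ in Equation~\ref{eq:niw_weight} well defined, nonnegative, and of total mass $L$.
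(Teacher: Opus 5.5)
Your proof is correct and takes essentially the same route as the paper. Both partition $\mathcal T$ at the interval boundaries, use $a_{\mathcal P}\le 1$ and $\phi_{\mathcal P}\ge -1$, and rule out the single bad case by noting that full coverage of a positive-length segment forces positive pairwise IoU. The paper argues this directly (coverage $c=1$ gives $\operatorname{AvgPairIoU}(\mathcal P)>0$, hence $a_{\mathcal P}<1$), whereas you argue the logically equivalent contrapositive starting from $a_{\mathcal P}=1$.
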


\begin{proof}
Partition $\mathcal T$ at the rollout interval boundaries and consider an
open segment $A\subseteq\mathcal T$ of positive length between consecutive
boundaries. Let $c$ be its constant group coverage. If $c=0$, then $\phi_{\mathcal P}=0$ and $1+a_{\mathcal P}\phi_{\mathcal P}=1$. If $0<c<1$, then $\phi_{\mathcal P}=1-2c>-1$ and $a_{\mathcal P}\le1$, so $1+a_{\mathcal P}\phi_{\mathcal P}>0$. If $c=1$, every rollout contains $A$; invalid parses therefore cannot occur in this case, and every pair has positive interval IoU. Hence $\mathrm{AvgPairIoU}(\mathcal P)>0$. Since $u_{\mathrm{floor}}<1$,
\begin{equation}
a_{\mathcal P}
=\max\bigl(u_{\mathcal P},u_{\mathrm{floor}}\bigr)<1,
\end{equation}
and $1+a_{\mathcal P}\phi_{\mathcal P}=1-a_{\mathcal P}>0$. These segments cover the target except for their boundary points, proving that the normalizing denominator is positive.

The segment boundaries form a finite set and therefore do not affect any integral.
Segments of zero length are discarded, and a prediction whose clipped endpoints coincide
contributes no coverage of positive length.
\end{proof}

Equation~\ref{eq:niw_weight} fixes $\int_{\mathcal T}\widetilde w_{\mathcal P}=L$. Positivity then gives $0\le R_i^{\mathrm S}\le L/(L+F_i)\le1$. If $a_{\mathcal P}=0$, then $\widetilde w_{\mathcal P}=1$ and \method{} recovers ordinary IoU. If every rollout misses $\mathcal T$, then $C_{\mathcal P}=\phi_{\mathcal P}=0$ on the target, so the field is uniform and both rewards give zero overlap.

\subsection{Exact refinement and preference based on group support}
An overlap verifier that depends only on $(I_i,F_i,L)$ assigns equal scores whenever two
predictions have the same aggregate overlap statistics, irrespective of where their
correct overlap lies. More generally, an independently computed reward for rollout $i$
is functionally independent of every other $P_m$, $m\ne i$; changing another rollout
cannot make that verifier adapt either its overall intervention strength or its
preference across positions. These are the two scales of information supplied by
conditioning \method{} on $\mathcal P$.

\paragraph{Proof of Proposition~\ref{prop:refinement}.}
For member $P_i$, define
\begin{equation}
\label{eq:BM_def}
B_{\mathcal P}=\int_{\mathcal T}\phi_{\mathcal P}(t)dt,
\qquad
M_i=\int_{\mathcal T}\phi_{\mathcal P}(t)\ind[t\in P_i]dt.
\end{equation}
Since $B_{\mathcal P}=L\bar\phi_{\mathcal P}$, Equation~\ref{eq:niw_weight}
gives
\begin{align}
\widetilde w_{\mathcal P}(t)-1
&=
\frac{1+a_{\mathcal P}\phi_{\mathcal P}(t)}
{1+a_{\mathcal P}\bar\phi_{\mathcal P}}-1 \notag\\
&=
\frac{a_{\mathcal P}}
{1+a_{\mathcal P}\bar\phi_{\mathcal P}}
\bigl(\phi_{\mathcal P}(t)-\bar\phi_{\mathcal P}\bigr),
\end{align}
which is Equation~\ref{eq:pointwise_refinement}. The raw target mass is
$L+a_{\mathcal P}B_{\mathcal P}$, while the raw mass covered by $P_i$ is
$I_i+a_{\mathcal P}M_i$. Substitution into Equation~\ref{eq:suture_reward}
gives
\begin{align}
R_i^{\mathrm S}-R_i^{\mathrm{IoU}}
&=\frac{L(I_i+a_{\mathcal P}M_i)-I_i(L+a_{\mathcal P}B_{\mathcal P})}
{(L+a_{\mathcal P}B_{\mathcal P})(L+F_i)} \notag\\
&=\frac{a_{\mathcal P}}{1+a_{\mathcal P}B_{\mathcal P}/L}
\frac{M_i-I_iB_{\mathcal P}/L}{L+F_i}.
\label{eq:refinement_proof}
\end{align}
For $U\sim\mathrm{Unif}(\mathcal T)$,
\begin{equation}
M_i-\frac{I_i}{L}B_{\mathcal P}
=L\,\Cov\bigl(\phi_{\mathcal P}(U),\ind[U\in P_i]\bigr).
\end{equation}
Together with $\bar\phi_{\mathcal P}=B_{\mathcal P}/L$, this is exactly
Equation~\ref{eq:main_refinement}. Lemma~\ref{lem:positivity} makes its
denominator positive; thus $\beta_{\mathcal P}\ge0$, with strict inequality
whenever $a_{\mathcal P}>0$.

\paragraph{Proof of Corollary~\ref{cor:support_order}.}
Each scored member $P_i$ contributes to its own coverage, so $C_{\mathcal P}(t)\ge1/G>0$
on $P_i\cap\mathcal T$. Using the shorthand from Equation~\ref{eq:avg_scored_coverage},
the branch for zero support in Equation~\ref{eq:residual} never enters this integral, so
\begin{equation}
\label{eq:scored_member}
M_i
=I_i\bigl(1-2\bar C_i\bigr).
\end{equation}
For members $P_i,P_j$ with common $I>0$ and $F$, the IoU terms and the baseline over the
target in Equation~\ref{eq:refinement_proof} cancel. Applying
Equation~\ref{eq:scored_member} yields
\begin{equation}
R_i^{\mathrm S}-R_j^{\mathrm S}
=\frac{2\beta_{\mathcal P}I}{L+F}
\bigl(\bar C_j-\bar C_i\bigr),
\end{equation}
which proves the claim. This ordering concerns the structured overlap channel alone; unchanged format terms may still separate two responses.

The convention for zero coverage concerns normalization of target mass
rather than the ordering among covered locations.
For a fixed rollout group and amplitude, applying
$1-2C_{\mathcal P}(t)$ also at uncovered locations would leave
every member's raw overlap numerator unchanged, while increasing
the common normalization denominator whenever $a_{\mathcal P}>0$
and the uncovered target region has positive duration.
The resulting temporal rewards would therefore differ by a
common positive scale, preserving their ordering within the group.
This does not imply identical training updates when format
rewards and advantage normalization are included.

A zero residual specifies the raw baseline, not a normalized
weight of one.
By Equation~\ref{eq:pointwise_refinement}, the normalized weight
exceeds one precisely when
$\phi_{\mathcal P}(t)>\bar\phi_{\mathcal P}$, provided
$a_{\mathcal P}>0$.
Thus, half coverage is the neutral point of the raw contrast,
not a fixed threshold for normalized upweighting.

\subsection{Derivation of the structured update component}
\label{app:grad_decomp}
For the controlled verifier comparison,
\begin{equation}
r_i^{\mathrm S}=R_i^{\mathrm{format}}+R_i^{\mathrm S},
\qquad
r_i^{\mathrm{IoU}}=R_i^{\mathrm{format}}+R_i^{\mathrm{IoU}}.
\end{equation}
Define
\begin{equation}
\chi_i=r_i^{\mathrm S}-r_i^{\mathrm{IoU}}
=\beta_{\mathcal P}\xi_i,
\qquad
\widetilde\chi_i=\chi_i-\bar\chi.
\end{equation}
Centering $r_i^{\mathrm S}=r_i^{\mathrm{IoU}}+\chi_i$ and dividing by
$\sigma_{\mathrm S}+\delta$ gives Equation~\ref{eq:advantage_decomp}.
Multiplying that identity by $\mathbf z_i$ and averaging over the fixed rollout group
yields Equation~\ref{eq:gradient_decomp_body}.

\section{Training and Evaluation Details}
\label{app:training_details}

\subsection{Training data and optimization}

The settings in this section apply to TaRO and the controlled verifier
variants. Time-R1 is reproduced separately following its original
training settings \citep{wang2025timer}.
We train all controlled variants on 2.5K examples from the
timerft\_data split of TimeR1-Dataset \citep{wang2025timer}.
We follow the training framework and optimization setup of TaRO
\citep{zheng2026temporalaware}. For the primary Qwen2.5-VL-7B-Instruct experiments,
training runs for three epochs (468 optimization steps) on four NVIDIA B200
GPUs. Each step uses a prompt batch size of 16 and $G=8$ rollouts per prompt,
for 128 sampled responses before minibatch updates. We use a learning rate of
$10^{-6}$, sampling temperature $1.0$, and a KL loss coefficient of $0.01$;
the KL term is applied in the optimization loss rather than added to the
reward. The maximum prompt and response lengths are 7168 and 1024 tokens,
respectively. The primary \method{} configuration uses
$u_{\mathrm{floor}}=0.3$.

\subsection{Warmup using captions}
As in the TaRO training framework, the first epoch (156 steps) uses a reasoning warmup
using captions. When temporal captions are available, a nonempty random subset of
caption segments is prepended to the model response
as a partial reasoning prefix before the remaining tokens are generated. The
injected prefix is included in the response tokens used for optimization.
Because these prefixed trajectories are not sampled entirely from the current
policy, they are marked as off-policy; under the framework's masking rule,
updates for such trajectories are suppressed when the advantage is negative and retained
when it is positive. The warmup is disabled after the
first epoch, and the remaining two epochs use on-policy GRPO rollouts.
All controlled reward variants use the same warmup procedure. Consequently,
the timestamped reasoning format induced by this warmup is shared across
reward variants; our analyses of reasoning traces compare how the trained policies
use that common format rather than attributing the format itself to \method{}.

\subsection{Evaluation and video preprocessing}
We evaluate on the QVHighlights validation split (1,550 examples),
Charades-STA test split (3,720), and ActivityNet Captions val1 split(15,933).
For each backbone and benchmark, TaRO and the controlled verifier
variants use the same split, prompt, video preprocessing, and decoding
settings.
Table~\ref{tab:train_eval_settings} lists the training and evaluation
settings for the primary Qwen2.5-VL-7B-Instruct experiments.

\begin{table}[H]
\caption{\textbf{Training and evaluation settings.}
TaRO and all controlled reward variants share these settings within
each stage for Qwen2.5-VL-7B-Instruct. For visual patch size $p$, the
visual token budget $B$ sets total pixels to $B(2p)^2$.}
\label{tab:train_eval_settings}
\centering
\small
\renewcommand{\arraystretch}{1.0}
\begin{tabularx}{\linewidth}{@{}lXX@{}}
\toprule
Setting & Training rollouts & Evaluation \\
\midrule
Decoding & Sampling & Greedy \\
Temperature & 1.0 & 0.0 \\
Top-$p$ / top-$k$ & $1.0$ / $-1$ & $1.0$ / $-1$ \\
Responses per prompt & 8 & 1 \\
Response limit (tokens) & 1,024 & 1,024 \\
Visual token budget $B$ & 3,584 & 4,096 \\
Maximum frames & 768 & 1,024 \\
Requested frame rate (fps) & 2.0 (library default) & 2 (explicit) \\
Minimum frame pixels & $4(2p)^2$ & $4(2p)^2$ \\
Prompt template & TaRO \texttt{r1}, single interval & TaRO \texttt{r1}, single interval \\
Prompt limit (tokens) & 7,168 & --- \\
Model context limit (tokens) & 8,192 & 16,384 \\
\bottomrule
\end{tabularx}
\end{table}

\paragraph{Video preprocessing.}
Training uses the default 2.0 fps setting of qwen-vl,
while evaluation explicitly sets 2 fps.
For Ego4D-NLQ and TACoS, the evaluation video token budget is increased
to 8,192 while retaining the same requested frame rate across reward
variants. Qwen3-VL-8B-Instruct uses a visual patch size of 16 pixels.

\section{Full Benchmark Results}
\label{app:full_benchmark_results}

For completeness, Table~\ref{tab:full_benchmark_results} reports the broader
comparison omitted from the condensed main table. Comparator rows other than
Time-R1 and TaRO are transcribed from Table~1 of TaRO
\citep{zheng2026temporalaware}; the original methods are described and cited in
Section~\ref{sec:related_work}. We reproduce Time-R1 separately under its
original training settings \citep{wang2025timer} and reevaluate TaRO on
all three datasets for a direct comparison. We omit
VideoChat-R1.5 \citep{yan2025videochatr} results on Charades-STA because Charades data were included in
its training, making a direct comparison on that benchmark unfair. All values
in that table are R1 at the stated IoU threshold, shown as percentages
rounded to one decimal place.

\begin{table}[H]
\caption{\textbf{Full video temporal grounding comparison.} The table includes
the broader set of temporal grounding baselines alongside the direct TaRO and
\method{} comparisons retained in Table~\ref{tab:main_results}.}
\label{tab:full_benchmark_results}
\centering
\small
\resizebox{\linewidth}{!}{%
\begin{tabular}{lcccccccccc}
\toprule
& & \multicolumn{3}{c}{QVHighlights} & \multicolumn{3}{c}{ActivityNet} & \multicolumn{3}{c}{Charades-STA} \\
\cmidrule(lr){3-5}\cmidrule(lr){6-8}\cmidrule(lr){9-11}
Method & Size & R1@0.3 & R1@0.5 & R1@0.7 & R1@0.3 & R1@0.5 & R1@0.7 & R1@0.3 & R1@0.5 & R1@0.7 \\
\midrule
ChatVTG          & 7B & --    & --    & --   & 40.7 & 22.5 & 9.4  & 52.7 & 33.0 & 15.9 \\
TimeChat         & 7B & --    & 8.3   & 4.3  & 36.2 & 20.2 & 9.5  & --   & 32.2 & 13.4 \\
HawkEye          & 7B & --    & --    & --   & 49.1 & 29.3 & 10.7 & 50.6 & 31.4 & 14.5 \\
VTimeLLM         & 7B & --    & 26.1  & 11.1 & 44.0 & 27.8 & 14.3 & 51.0 & 27.5 & 11.4 \\
TimeSuite        & 7B & --    & 12.3  & 9.2  & --   & 16.6 & 9.3  & 69.9 & 48.7 & 24.0 \\
VideoChat-Flash  & 7B & --    & --    & --   & --   & --   & --   & 74.5 & 53.1 & 27.6 \\
TRACE            & 7B & --    & --    & --   & --   & --   & --   & --   & 40.3 & 19.4 \\
\midrule
\multicolumn{11}{l}{\textit{Qwen2.5-VL-7B-Instruct as base model}} \\
Qwen2.5-VL-7B-Instruct & 7B & 15.9 & 7.1  & 4.2  & 24.4 & 13.6 & 6.7  & 72.5 & 53.6 & 28.5 \\
VideoChat-R1.5   & 7B & 71.4  & 55.8 & 38.4 & 52.4 & 32.3 & 16.8 & --   & --   & --   \\
Time-R1          & 7B & 80.3  & 66.2 & 44.8 & 58.6 & 39.0 & 21.4 & 78.1 & 60.8 & 35.3 \\
TaRO             & 7B & 81.7  & 66.8 & 45.3 & 61.0 & 39.7 & 20.5 & 79.3 & 63.3 & 36.7 \\
\method{} (ours) & 7B & \textbf{84.1} & \textbf{70.1} & \textbf{50.6} & \textbf{66.8} & \textbf{47.1} & \textbf{26.2} & \textbf{81.6} & \textbf{65.1} & \textbf{37.6} \\
\midrule
\multicolumn{11}{l}{\textit{Qwen3-VL-8B-Instruct as base model}} \\
TaRO             & 8B & 83.2 & 68.7 & 51.7 & 58.0 & 39.8 & 23.9 & 82.9 & 67.0 & 37.7 \\
\method{} (ours) & 8B & \textbf{84.3} & \textbf{70.0} & \textbf{54.4} & \textbf{63.0} & \textbf{44.5} & \textbf{27.3} & \textbf{83.6} & \textbf{67.4} & \textbf{39.9} \\
\bottomrule
\end{tabular}}
\end{table}

\subsection{Results across training runs}
\label{app:training_runs}

We compare TaRO and \method{} across three training runs.
Table~\ref{tab:training_runs} reports the mean and sample standard
deviation across runs. \method{} achieves higher mean scores on
all four metrics for each benchmark.

\begin{table}[H]
\caption{\textbf{Results across three training runs.}
Values are percentages, reported as mean $\pm$ sample standard
deviation and rounded to one decimal place.}
\label{tab:training_runs}
\centering
\small
\begin{tabular}{llcccc}
\toprule
Benchmark & Method & mIoU & R1@0.3 & R1@0.5 & R1@0.7 \\
\midrule
QVHighlights & TaRO & $56.5 \pm 4.6$ & $78.7 \pm 3.4$ & $63.2 \pm 5.7$ & $41.4 \pm 7.6$ \\
 & \method{} & $\mathbf{61.4} \pm 1.1$ & $\mathbf{83.0} \pm 1.0$ & $\mathbf{69.3} \pm 1.3$ & $\mathbf{48.3} \pm 2.0$ \\
\midrule
ActivityNet & TaRO & $39.4 \pm 1.9$ & $58.2 \pm 3.1$ & $38.2 \pm 2.1$ & $19.6 \pm 1.6$ \\
 & \method{} & $\mathbf{44.3} \pm 1.8$ & $\mathbf{64.8} \pm 2.3$ & $\mathbf{44.9} \pm 2.2$ & $\mathbf{24.4} \pm 1.5$ \\
\midrule
Charades-STA & TaRO & $54.2 \pm 0.9$ & $79.0 \pm 0.5$ & $62.8 \pm 1.3$ & $35.6 \pm 2.2$ \\
 & \method{} & $\mathbf{55.5} \pm 0.4$ & $\mathbf{80.8} \pm 0.7$ & $\mathbf{64.3} \pm 0.9$ & $\mathbf{37.1} \pm 0.4$ \\
\bottomrule
\end{tabular}
\end{table}

\section{Additional Experiments and Analyses}
\label{app:additional_experiments}

We first report sensitivity to group size, verifier ablations for each benchmark, and
sensitivity to the disagreement floor. We then examine the structured weight
field in realized rollout groups and compare verifier components by event start
time, followed by generated response length.

\subsection{Sensitivity to rollout group size}
\label{app:group_size}
Table~\ref{tab:group_size_sensitivity} varies $G$ while holding the remaining
configuration fixed. The primary $G=8$ configuration performs best overall,
with the clearest separation at stricter localization thresholds.

\begin{table}[H]
\caption{Sensitivity to rollout group size using Qwen2.5-VL-7B-Instruct. All numbers are
R1 at the stated IoU threshold and are reported as percentages rounded to one decimal
place. The $G=8$ results are reproduced from Table~\ref{tab:main_results}.}
\label{tab:group_size_sensitivity}
\centering
\small
\begin{tabular}{ccccccc}
\toprule
& \multicolumn{3}{c}{QVHighlights} & \multicolumn{3}{c}{Charades-STA} \\
\cmidrule(lr){2-4}\cmidrule(lr){5-7}
Group size $G$ & R1@0.3 & R1@0.5 & R1@0.7 & R1@0.3 & R1@0.5 & R1@0.7 \\
\midrule
2 & 82.2 & 66.8 & 44.7 & 79.3 & 58.3 & 32.8 \\
4 & 81.7 & 66.9 & 47.5 & 80.5 & 62.7 & 35.4 \\
8 & \textbf{84.1} & \textbf{70.1} & \textbf{50.6} & \textbf{81.6} & \textbf{65.1} & \textbf{37.6} \\
\bottomrule
\end{tabular}
\end{table}

\subsection{Verifier ablations by benchmark}
\label{app:constant_amplitude}
Table~\ref{tab:constant_amplitude_details} gives the results for each benchmark
behind the rows for Naive IoU and fixed amplitudes in
Table~\ref{tab:amplitude_ablation}. Naive IoU uses uniform target weighting;
the fixed controls retain the support construction at each position. Every fixed control
uses the same rule for constructing the support field at each position from the rollout group; fixing the
amplitude therefore does not make the verifier independent across rollouts,
nor does it match the realized reward or gradient distribution.

The $a=0.8$ control also appears in the main comparison in
Table~\ref{tab:amplitude_ablation}. Its QVHighlights and Charades-STA mIoUs are
$59.6$ and $55.4$, respectively; their average with equal weight is $57.5$ after
rounding, as reported in Table~\ref{tab:amplitude_ablation}.
Table~\ref{tab:constant_amplitude_details} additionally reports the $a=0.5$
control. All values are point estimates from a single run rounded to one decimal
place, without uncertainty intervals.

\begin{table}[H]
\caption{Results on QVHighlights and Charades-STA for the
Naive IoU reference and training controls with constant amplitudes. All values are
percentages rounded to one decimal place. The Naive IoU, $a=0.3$, $a=0.4$,
and $a=0.8$ settings correspond to the main comparison in
Table~\ref{tab:amplitude_ablation}; $a=0.5$ provides an additional
control with constant amplitude. The $a=0.4$ control is used in
Figure~\ref{fig:adaptive_mechanism}.}
\label{tab:constant_amplitude_details}
\centering
\small
\begin{tabular}{llcccc}
\toprule
Verifier & Benchmark & mIoU & R1@0.3 & R1@0.5 & R1@0.7 \\
\midrule
Naive IoU & QVHighlights & 57.8 & 79.7 & 64.6 & 43.5 \\
          & Charades-STA & 51.8 & 77.4 & 59.2 & 30.8 \\
\midrule
$a=0.3$ & QVHighlights & 58.5 & 79.7 & 65.2 & 44.3 \\
        & Charades-STA & 53.7 & 78.5 & 61.3 & 34.4 \\
\midrule
$a=0.4$ & QVHighlights & 59.7 & 81.6 & 67.2 & 46.4 \\
        & Charades-STA & 54.9 & 80.1 & 63.7 & 35.9 \\
\midrule
$a=0.5$ & QVHighlights & 60.8 & 83.2 & 67.7 & 47.1 \\
        & Charades-STA & 53.8 & 79.7 & 61.2 & 33.7 \\
\midrule
$a=0.8$ & QVHighlights & 59.6 & 81.9 & 66.8 & 45.6 \\
        & Charades-STA & 55.4 & 80.2 & 63.8 & 37.5 \\
\bottomrule
\end{tabular}
\end{table}

\subsection{Sensitivity to the disagreement floor}
\label{app:ufloor}

We examine the sensitivity of \method{} to the disagreement floor
$u_{\mathrm{floor}}$, which sets a lower bound on the redistribution amplitude for each
sample in Equation~\ref{eq:amplitude}. Table~\ref{tab:ufloor_sensitivity}
reports the simple average of QVHighlights and Charades-STA results for each
setting. This sweep varies the lower bound while preserving the amplitude rule based on
disagreement; the controls with constant amplitudes in
Table~\ref{tab:amplitude_ablation} instead test whether that rule can be
replaced by one fixed value.

Setting $u_{\mathrm{floor}}=0$ retains $a_{\mathcal P}=u_{\mathcal P}$ and
the group-conditioned support field, but removes the lower bound; it is not
the uniform IoU control. Fixing $a=0.3$ instead removes the amplitude's
dependence on disagreement. Neither setting matches the default adaptive
rule on the four benchmark-averaged metrics
(Tables~\ref{tab:amplitude_ablation} and~\ref{tab:ufloor_sensitivity}).

\begin{table}[H]
\caption{Sensitivity to the disagreement floor $u_{\mathrm{floor}}$. Results are the simple average of QVHighlights and Charades-STA. The default $u_{\mathrm{floor}}=0.3$ is highlighted.}
\label{tab:ufloor_sensitivity}
\centering
\small
\begin{tabular}{ccccc}
\toprule
$u_{\mathrm{floor}}$ & R1@0.3 & R1@0.5 & R1@0.7 & mIoU \\
\midrule
0.00 & 81.0 & 65.0 & 41.4 & 57.7 \\
0.15 & 81.0 & 64.8 & 40.7 & 57.1 \\
\textbf{0.30} & \textbf{82.9} & \textbf{67.6} & \textbf{44.1} & \textbf{59.2} \\
0.50 & 81.3 & 67.4 & 43.9 & 58.6 \\
0.70 & 80.4 & 65.2 & 41.6 & 57.1 \\
\bottomrule
\end{tabular}
\end{table}

A moderate disagreement floor performs best overall. Relative to removing the
floor ($u_{\mathrm{floor}}=0$), the default value of $0.3$ improves mean
R1@0.7 by $2.67$ points and mIoU by $1.54$ points, while $0.5$ retains
comparable performance. Increasing the floor further to $0.7$ removes most of
this gain. The sweep therefore supports a moderate nonzero floor rather than a
narrowly tuned optimum.

The corresponding activation statistics, measured late in training
(approximately after the second epoch), help interpret this nonmonotonic
trend. Across $u_{\mathrm{floor}}\in\{0,0.15,0.3,0.5,0.7\}$, the fraction of
realized groups satisfying $u_{\mathcal P}<u_{\mathrm{floor}}$ increases from
$0.0\%$ to $44.4\%$, $82.8\%$, $96.0\%$, and $99.4\%$, while
$\mathbb{E}[S_{\mathrm{corr}}]$ increases from $0.0663$ to $0.0740$, $0.1147$,
$0.2121$, and $0.3659$. The larger $S_{\mathrm{corr}}$ values indicate a less
uniform target-weight field, not necessarily more useful reward differences
between rollouts. Localization performance does not improve monotonically;
at a high floor, the amplitude also depends almost entirely on the floor
rather than realized disagreement. Appendix~\ref{app:structured_activation}
further compares target-weight variation across target durations.

\subsection{Structured variation across rollout groups}
\label{app:structured_activation}

Equation~\ref{eq:scorr_factorization} gives an exact factorization of
the variation over the target induced by structured verification:
\begin{equation}
\label{eq:scorr_empirical_factorization}
S_{\mathrm{corr}}(\mathcal P)
=
\beta_{\mathcal P}
\operatorname{Std}_{U\sim\mathrm{Unif}(\mathcal T)}
\left[\phi_{\mathcal P}(U)\right],
\end{equation}
where
\[
\beta_{\mathcal P}
=
\frac{a_{\mathcal P}}
{1+a_{\mathcal P}\bar\phi_{\mathcal P}},
\qquad
\bar\phi_{\mathcal P}
=
\frac{1}{L}
\int_{\mathcal T}\phi_{\mathcal P}(t)\,dt.
\]
Recall that
$\phi_{\mathcal P}(t)=0$ whenever $C_{\mathcal P}(t)=0$.
Thus, both the mean and standard deviation above are computed over
the full target, including locations with no rollout
coverage.

For the empirical analysis, we reconstruct the normalized target
weight field for each realized rollout group and measure
\begin{equation}
\label{eq:scorr_direct}
S_{\mathrm{corr}}^{\mathrm{direct}}(\mathcal P)
=
\left[
\frac{1}{L}
\int_{\mathcal T}
\left(
\widetilde w_{\mathcal P}(t)-1
\right)^2 dt
\right]^{1/2}.
\end{equation}

\paragraph{Variation across target duration.}
Table~\ref{tab:structured_activation_duration} stratifies realized
groups by target duration at the final analyzed stage.
Each column reports the median of the corresponding quantity across groups
within the duration bin.

The effective global coefficient $\beta_{\mathcal P}$ decreases
moderately with target duration, from $0.4264$ for 5--10s targets
to $0.3553$ for targets longer than 80s.
In contrast, the variation of the residual within the target changes much more
strongly: its median rises from $0.1000$ in the 5--10s bin to
$0.5000$ in the 40--80s bin.
The directly measured $S_{\mathrm{corr}}$ shows the corresponding
increase in structured field variation, from $0.0457$ for 5--10s
targets to $0.1750$ for 40--80s targets.

This separation is also visible when comparing the 5--10s and
40--80s bins:
$\beta_{\mathcal P}$ differs by only $1.15\times$, whereas
$\operatorname{Std}_{\mathcal T}(\phi_{\mathcal P})$
differs by $5.00\times$.
Thus, the duration dependence of the structured field is associated
primarily with changes in local residual variation rather than with
large changes in the global coefficient.

\begin{table}[H]
\caption{\textbf{Diagnostics of structured variation by target duration.}
$S_{\mathrm{corr}}$ is measured directly from the normalized target
weight field.
$\beta_{\mathcal P}$ and
$\operatorname{Std}_{\mathcal T}(\phi_{\mathcal P})$
are computed over the full target using the residual
definition in Equation~\ref{eq:residual}.
Entries are medians of the corresponding quantities across groups
within each duration bin.
Because the three columns are summarized independently, the product
of the two reported medians need not equal the reported median of
$S_{\mathrm{corr}}$.}
\label{tab:structured_activation_duration}
\centering
\small
\begin{tabular}{lcccc}
\toprule
Target duration
& $n$
& $S_{\mathrm{corr}}$
& $\beta_{\mathcal P}$
& $\operatorname{Std}_{\mathcal T}(\phi_{\mathcal P})$ \\
\midrule
5--10s
& 223
& 0.0457
& 0.4264
& 0.1000 \\

10--20s
& 241
& 0.1493
& 0.4022
& 0.3689 \\

20--40s
& 134
& 0.1696
& 0.3848
& 0.4403 \\

40--80s
& 67
& 0.1750
& 0.3723
& 0.5000 \\

80s+
& 16
& 0.1620
& 0.3553
& 0.4576 \\
\bottomrule
\end{tabular}
\end{table}

\subsection{Event-start-time analysis of verifier components}
\label{app:component_behavior}

The aggregate results do not show whether gains vary with event start time.
Figure~\ref{fig:adaptive_mechanism} compares Naive IoU, the mean-matched fixed
$a=0.4$ control, and \method{} on Charades-STA and QVHighlights.
We examine final-answer mIoU and the first well-formed temporal span in the
reasoning trace before the final answer (the first temporal mention).
On both benchmarks, Naive IoU and \method{} use step 350,
while the fixed control uses step 300.

Panel (a) retains examples with well-formed final answers from all three methods:
$3{,}718$ of $3{,}720$ on Charades-STA and $1{,}448$ of $1{,}550$ on QVHighlights.
The $102$ excluded QVHighlights responses lack the \texttt{<answer>} tag in
Naive IoU. Requiring valid first temporal mentions for all three methods
leaves the same pools for panels (b, c); panel (c) further restricts event
start time, leaving $792$ and $353$ examples, respectively.
\begin{figure}[h]
    \centering
    \captionsetup[subfigure]{justification=centering,singlelinecheck=true}
    \begin{subfigure}[t]{0.325\linewidth}
        \centering
        \vspace{0pt}
        \includegraphics[width=\linewidth]{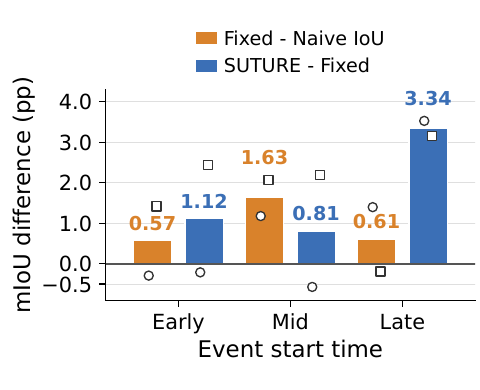}
        \caption{Final-answer mIoU gains.}
    \end{subfigure}\hfill
    \begin{subfigure}[t]{0.325\linewidth}
        \centering
        \vspace{0pt}
        \includegraphics[width=\linewidth]{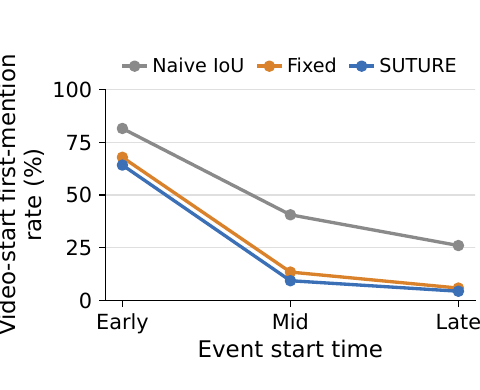}
        \caption{Video-start anchoring.}
    \end{subfigure}\hfill
    \begin{subfigure}[t]{0.325\linewidth}
        \centering
        \vspace{0pt}
        \includegraphics[width=\linewidth]{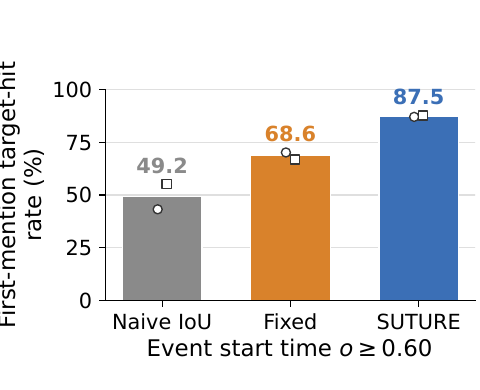}
        \caption{Late-event target hits.}
    \end{subfigure}
    \caption{\textbf{Verifier components by event start time.}
    Fixed uses $a=0.4$. Bars in (a, c) and curves in (b) average Charades-STA
    and QVHighlights equally. Numeric labels give the bar means; open circles
    and squares show Charades-STA and QVHighlights, respectively, in (a, c).
    Panel (c) uses events with $o\ge0.60$.}
    \label{fig:adaptive_mechanism}
\end{figure}

Let $s_{\mathrm{GT}}$ denote the annotated event's start time,
$s_{\mathrm{first}}$ the start time of the first temporal mention,
and $D_V$ the video duration.
Using normalized event start time $o=s_{\mathrm{GT}}/D_V$,
panels (a, b) use Early ($o<0.15$), Mid ($0.15\le o<0.45$),
and Late ($o\ge0.45$) bins. Panel (a) reports mIoU differences in percentage points.
Video-start first mentions in (b) satisfy
$s_{\mathrm{first}}/D_V<0.02$. A target hit in (c) means strictly positive overlap
with any annotated target interval, without an IoU threshold.

The additional mIoU gain from \method{} over the fixed control is largest
in the Late bin: $3.53$ points on Charades-STA and $3.15$ on QVHighlights,
or $3.34$ points with equal benchmark weights.

Most of the reduction in video-start anchoring occurs from Naive IoU to the
fixed control. The further reduction with \method{} is smaller, and the
two methods have the same Late-bin rate on QVHighlights ($2.6\%$).
Their first-mention target-hit rates differ more clearly for events with
$o\ge0.60$: \method{} reaches $87.1\%$ versus $70.2\%$ on Charades-STA
and $87.8\%$ versus $66.9\%$ on QVHighlights.
Reduced video-start anchoring does not by itself imply a target hit.
These observations do not establish that first-mention behavior causes the
grounding gains.

\subsection{Generated response length}
\label{app:response_length}

Table~\ref{tab:response_length} reports output token counts for the
Qwen2.5-VL-7B-Instruct policies. The TaRO and \method{} counts use
the same evaluation outputs as Table~\ref{tab:main_results}.
We count tokens with the Qwen2.5-VL-7B-Instruct tokenizer and report
total output length alongside separate counts for the reasoning
trace and final answer.
All three variants use the evaluation settings in Table~\ref{tab:train_eval_settings},
including a response limit of 1,024 tokens, a budget of 4,096 visual tokens, a model
context limit of 16,384 tokens, and video sampling at 2 fps. We measure output tokens
only.

\begin{table}[H]
\caption{\textbf{Generated response length.}
$n$ is the number of evaluated responses. P90 denotes the 90th
percentile of total output length. Reasoning and answer columns
report mean token counts for the respective response portions.
At limit is the percentage of responses reaching 1,024
output tokens.}
\label{tab:response_length}
\centering
\small
\begin{tabular}{lrrrrrrr}
\toprule
& & \multicolumn{3}{c}{Output tokens}
& \multicolumn{2}{c}{Mean tokens} & \\
\cmidrule(lr){3-5}\cmidrule(lr){6-7}
Method & $n$ & Mean & Median & P90 & Reasoning & Answer & At limit (\%) \\
\midrule
\multicolumn{8}{l}{\textit{Charades-STA}} \\
\method{} & 3,720 & \textbf{101.9} & 100 & 134 & 77.6 & 14.3 & 0.00 \\
TaRO & 3,720 & 148.9 & 148 & 182 & 124.7 & 13.2 & 0.00 \\
Naive IoU & 3,720 & 195.3 & 193 & 253 & 170.7 & 14.2 & 0.05 \\
\midrule
\multicolumn{8}{l}{\textit{QVHighlights}} \\
\method{} & 1,550 & \textbf{146.0} & 109 & 258 & 119.9 & 15.3 & 0.06 \\
TaRO & 1,550 & 192.5 & 181 & 249 & 167.2 & 14.2 & 0.06 \\
Naive IoU & 1,550 & 447.2 & 369 & 845 & 423.2 & 14.2 & 6.52 \\
\midrule
\multicolumn{8}{l}{\textit{ActivityNet Captions}} \\
\method{} & 15,933 & \textbf{149.3} & 113 & 257 & 123.6 & 14.9 & 0.21 \\
TaRO & 15,933 & 195.2 & 182 & 256 & 170.3 & 13.9 & 0.04 \\
Naive IoU & 15,933 & 443.1 & 335 & 1024 & 420.8 & 13.1 & 12.03 \\
\bottomrule
\end{tabular}
\end{table}

Relative to TaRO, \method{} reduces mean output length by $31.6\%$
on Charades-STA, $24.2\%$ on QVHighlights, and $23.5\%$ on ActivityNet,
without an explicit length penalty in the reward.
Mean reasoning trace length decreases by
$37.8\%$, $28.3\%$, and $27.4\%$ on Charades-STA, QVHighlights,
and ActivityNet, respectively. Mean final answer length is instead
about one token longer. Median output length is lower on all three
benchmarks, but P90 is slightly higher on QVHighlights and ActivityNet.
Naive IoU reaches the response limit on $6.52\%$ of QVHighlights
and $12.03\%$ of ActivityNet examples; its length statistics therefore
reflect this generation cap rather than unconstrained response lengths.

These measurements come from a single run at the checkpoint selected for each method,
without controlled training seeds. They quantify the number of generated tokens rather
than elapsed generation time and do not establish that
reduced anchoring to the video start causes shorter responses.

\clearpage
\section{Protocol for the Residual Gradient Diagnostic}
\label{app:update_geometry_metrics}

This appendix defines the analysis reported in
Figure~\ref{fig:prop2_semantics}.
Panels~\ref{fig:prop2-support} and~\ref{fig:prop2-preference}
use fixed \method{} rollout groups from the first visit, with model parameters
held fixed at step 350.
The diagnostic measures local responses to the structured correction
on these fixed rollouts.

Panel~\ref{fig:prop2-support} reports pooled Spearman alignment
between temporal support deficits and directional responses associated with each segment.
Panel~\ref{fig:prop2-preference} reports the response difference between rollouts with
positive and negative corrections, standardized within each group;
faint paired markers show retained groups.
Large markers summarize each metric separately for recovered and
unrecovered groups, and error bars give 95\% bootstrap intervals from 20,000
replicates, using rollout groups as clusters.
The analysis includes 16 recovered and 19 unrecovered rollout groups
drawn from severe Naive IoU failures and retained under the criteria below.
Both diagnostic directions use score vectors normalized by response length
and the same selected model parameters: those of the language model head and last two decoder layers.

\subsection{Group selection for the gradient analysis}

We consider training instances observed three times under both
\method{} and the Naive IoU control.
The first visit denotes each prompt's first encounter during training.
We first restrict to severe Naive IoU failures, defined by a
mean IoU of at most $0.1$ across the group at the final visit.
Within this set, an instance is \emph{recovered} when its mean IoU across the \method{}
group at the final visit is at least $0.3$, and \emph{unrecovered}
when that value is at most $0.1$; instances with intermediate
\method{} outcomes are excluded from this comparison.

Before gradient computation, we match 25 recovered and 25 unrecovered
instances using Naive IoU scores at the first and final visits and the log of target duration.
Excluding groups with a degenerate direction of the structured correction
leaves 16 recovered and 19 unrecovered rollout groups,
containing 54 and 52 valid support bins, respectively.
Thus, the final gradient analysis is conditional on severe
Naive IoU failure, retrospective classification as recovered or
unrecovered, and exclusion of degenerate directions of the structured correction.
Because filtering need not preserve the original matched pairs,
uncertainty is estimated by bootstrapping rollout groups rather than matched pairs.

\subsection{Fixed model parameters and score vectors}

Each of the $35$ instances retained for the gradient analysis contributes its fixed
\method{} rollout group of $G=8$ responses from the first visit.
We evaluate those responses at
$\theta_\star=\theta_{350}$ in evaluation mode with dropout disabled,
using float32 arithmetic and the video and metadata preprocessing used for training.

Let $\mathcal S$ denote the selected model parameters: those of the
language model head and last two decoder layers. The same model parameters are used for
both diagnostic directions.

For response $y_i$, define the score vector summed over tokens and its counterpart
normalized by response length as
\begin{equation}
\mathbf s_i
=
\nabla_{\theta_{\mathcal S}}
\sum_t
\log\pi_{\theta_\star}
(y_{i,t}\mid x,y_{i,<t}),
\qquad
\mathbf z_i
=
\frac{\mathbf s_i}{|y_i|}.
\end{equation}

The replay stores the Gram matrix of scores summed over tokens and converts it to a
matrix normalized by response length. Both diagnostic panels in
Figure~\ref{fig:prop2_semantics} use $K^{(z)}$:
\begin{equation}
K^{(s)}_{ij}
=
\mathbf s_i^\top\mathbf s_j,
\qquad
K^{(z)}_{ij}
=
\frac{K^{(s)}_{ij}}
{|y_i||y_j|}.
\end{equation}

\Needspace{10\baselineskip}
\subsection{Diagnostic directions and first-order responses}

The diagnostic IoU direction normalizes temporal IoU alone:
\begin{equation}
\widehat A_i^{\mathrm{IoU}}
=
\frac{
R_i^{\mathrm{IoU}}-\bar R^{\mathrm{IoU}}
}{
\widehat\sigma_{\mathrm{IoU}}+\delta
},
\qquad
\mathbf g_{\mathrm{IoU}}
=
\frac1G
\sum_i
\widehat A_i^{\mathrm{IoU}}
\mathbf z_i,
\end{equation}
where $\bar R^{\mathrm{IoU}}$ and
$\widehat\sigma_{\mathrm{IoU}}$ denote the mean and standard deviation of the temporal
IoU scores within the group.

For the structured correction, we use the same centered reward
correction introduced in Section~\ref{sec:theory}.
For each realized rollout group,
\begin{equation}
\chi_i
:=
r_i^{\mathrm S}-r_i^{\mathrm{IoU}}
=
R_i^{\mathrm S}-R_i^{\mathrm{IoU}}
=
\beta_{\mathcal P}\xi_i,
\qquad
\widetilde\chi_i
=
\chi_i-\bar\chi,
\end{equation}
where
\begin{equation}
\bar\chi
=
\frac1G\sum_{j=1}^{G}\chi_j.
\end{equation}
The total and temporal reward differences are equal
because the controlled variants use the same format reward for each
rollout.

We define the diagnostic direction for the structured correction as
\begin{equation}
\label{eq:diagnostic_corr_direction}
\mathbf g_{\mathrm{corr}}
=
\frac1G
\sum_i
\widetilde\chi_i\mathbf z_i.
\end{equation}
This computes the correction direction from
Equation~\ref{eq:gradient_decomp_body} for the fixed rollouts,
using only the selected model parameters.
The factor $\beta_{\mathcal P}$ is already contained in
$\widetilde\chi_i$; we do not additionally multiply the diagnostic
direction by $\alpha_{\mathcal P}$.

For $d\in\{\mathrm{IoU},\mathrm{corr}\}$, define the rollout response as
\begin{equation}
\label{eq:fig3_response}
q_i^{(d)}
=
\mathbf z_i^\top\mathbf g_d.
\end{equation}
This quantity is the directional derivative of the
log-likelihood normalized by response length in the analyzed parameter
coordinates.

The IoU reference direction uses temporal IoU rather than the
implemented total reward.
The comparison therefore isolates the temporal IoU direction and
the structured correction direction rather than reconstructing the
complete actor update.

\subsection{Support alignment}

We partition the implemented temporal representation at rollout interval boundaries
and use valid target segments with rollout coverage.
For segment $k$ in group $g$, let $c_{gk}$ be the fraction of
rollouts covering the segment and let $\bar c_g$ be the mean support
over its valid segments.

Define the support deficit and the mean directional response of
covering rollouts as
\begin{equation}
\delta_{gk}
=
\bar c_g-c_{gk},
\qquad
Q_{gk}^{(d)}
=
\frac{1}{|\mathcal I_{gk}|}
\sum_{i\in\mathcal I_{gk}}
q_i^{(d)},
\end{equation}
where $\mathcal I_{gk}$ is the set of rollouts covering segment $k$.

We pool segment observations separately for recovered and unrecovered
groups and define support alignment as
\begin{equation}
\label{eq:fig3_support_alignment}
\rho_d
=
\operatorname{Spearman}
\left(
\{\delta_{gk}\},
\{Q_{gk}^{(d)}\}
\right).
\end{equation}
Positive values indicate that rollouts covering segments with relatively low support
have larger directional responses in the
pooled analysis.

\begin{table}[H]
\caption{Support alignment in
Figure~\ref{fig:prop2-support}.
Intervals are 95\% bootstrap intervals with rollout groups as clusters.}
\label{tab:fig3_support_alignment}
\centering
\small
\begin{tabular}{llcc}
\toprule
Final outcome & Direction & $\rho_d$ & 95\% interval \\
\midrule
Recovered
    & $\mathbf g_{\mathrm{IoU}}$
    & $-0.056$
    & $[-0.393,\ 0.327]$ \\
Recovered
    & $\mathbf g_{\mathrm{corr}}$
    & $\mathbf{0.614}$
    & $[0.451,\ 0.798]$ \\
Unrecovered
    & $\mathbf g_{\mathrm{IoU}}$
    & $0.156$
    & $[-0.161,\ 0.514]$ \\
Unrecovered
    & $\mathbf g_{\mathrm{corr}}$
    & $\mathbf{0.513}$
    & $[0.292,\ 0.783]$ \\
\bottomrule
\end{tabular}
\end{table}

\subsection{Residual-sign preference gap}

For each retained rollout group, define
\begin{equation}
\label{eq:fig3_preference_gap}
\operatorname{Gap}_g(\mathbf g_d)
=
\frac{
\operatorname{mean}_{\widetilde\chi_i>0}
q_i^{(d)}
-
\operatorname{mean}_{\widetilde\chi_i<0}
q_i^{(d)}
}{
\operatorname{SD}_i(q_i^{(d)})
}.
\end{equation}

We average these standardized gaps for individual groups separately over
recovered and unrecovered groups.
Positive values indicate larger directional responses for rollouts with positive
corrections than for those with negative corrections on average.
Thus, the residual-sign split is determined by the sign of the
centered structured reward correction $\widetilde\chi_i$, rather
than by an independently chosen support threshold.

\begin{table}[H]
\caption{Residual-sign preference gap in
Figure~\ref{fig:prop2-preference}.
Intervals are 95\% bootstrap intervals with rollout groups as clusters.}
\label{tab:fig3_preference_gap}
\centering
\small
\begin{tabular}{llcc}
\toprule
Final outcome & Direction & Mean gap (SD) & 95\% interval \\
\midrule
Recovered
    & $\mathbf g_{\mathrm{IoU}}$
    & $-0.326$
    & $[-0.983,\ 0.355]$ \\
Recovered
    & $\mathbf g_{\mathrm{corr}}$
    & $\mathbf{1.664}$
    & $[1.500,\ 1.847]$ \\
Unrecovered
    & $\mathbf g_{\mathrm{IoU}}$
    & $-0.066$
    & $[-0.605,\ 0.467]$ \\
Unrecovered
    & $\mathbf g_{\mathrm{corr}}$
    & $\mathbf{1.903}$
    & $[1.729,\ 2.103]$ \\
\bottomrule
\end{tabular}
\end{table}

\subsection{Uncertainty, scaling, and scope}

We use 20,000 bootstrap replicates with the rollout group as the resampling unit, so
segments and responses from the
same group are not treated as independent.

The 95\% intervals for the
$\mathbf g_{\mathrm{corr}}$-minus-$\mathbf g_{\mathrm{IoU}}$
contrast are $[0.234,1.100]$ and $[0.046,0.723]$ for support
alignment in the recovered and unrecovered groups, respectively.
For the mean preference gap, the corresponding intervals are
$[1.328,2.663]$ and $[1.446,2.516]$.
These intervals characterize uncertainty within the retained
diagnostic population rather than variation across independent
training runs.

The preference gap standardized within each group is invariant to a
positive scaling shared within a group.
Pooled support alignment is not generally invariant to
positive scaling that varies across groups, since such scaling can change
ranks across groups.
Accordingly, we report $\mathbf g_{\mathrm{corr}}$ exactly as defined
in Equation~\ref{eq:diagnostic_corr_direction}, without applying the
additional factor $\alpha_{\mathcal P}$ from
Equation~\ref{eq:gradient_decomp_body}.

This experiment evaluates fixed past responses with model parameters
held fixed, using only the selected model parameters.
It characterizes local responses to the structured correction rather
than reconstructing the complete actor update.

\clearpage
\section{Qualitative Examples of Tracking Event Start Times}
\label{app:qualitative_activitynet}

\paragraph{Setup.}
Figures~\ref{fig:qualitative-onset-1} and~\ref{fig:qualitative-onset-2}
complement Section~\ref{sec:onset_tracking} with individual ActivityNet
examples. Each panel shows the ground truth interval, the first well-formed
temporal interval in the reasoning trace, the final prediction, and abbreviated
response text. Dashed outlines denote first mentions and solid bars denote final
answers. \method{}, TaRO, and plain IoU share the same backbone and training
setup; Time-R1 is shown only as a reference because its training
configuration differs.

\paragraph{Selection.} The four examples use a fixed ActivityNet selection
rule: the target starts after $40\%$ of the video, the first \method{} mention
overlaps the target, the first TaRO mention starts within the first $5\%$, and
\method{} exceeds every baseline by at least $0.60$ IoU. We additionally draw
examples from nonoverlapping bands of video duration and target length. The rule is
intended to expose clear instances of the behavior in
Section~\ref{sec:onset_tracking}; the quantitative claims rely on the full
evaluation sets rather than these selected panels.

\begin{figure}[H]
    \centering
    \begin{subfigure}[t]{\linewidth}
        \centering
        \includegraphics[width=\linewidth]{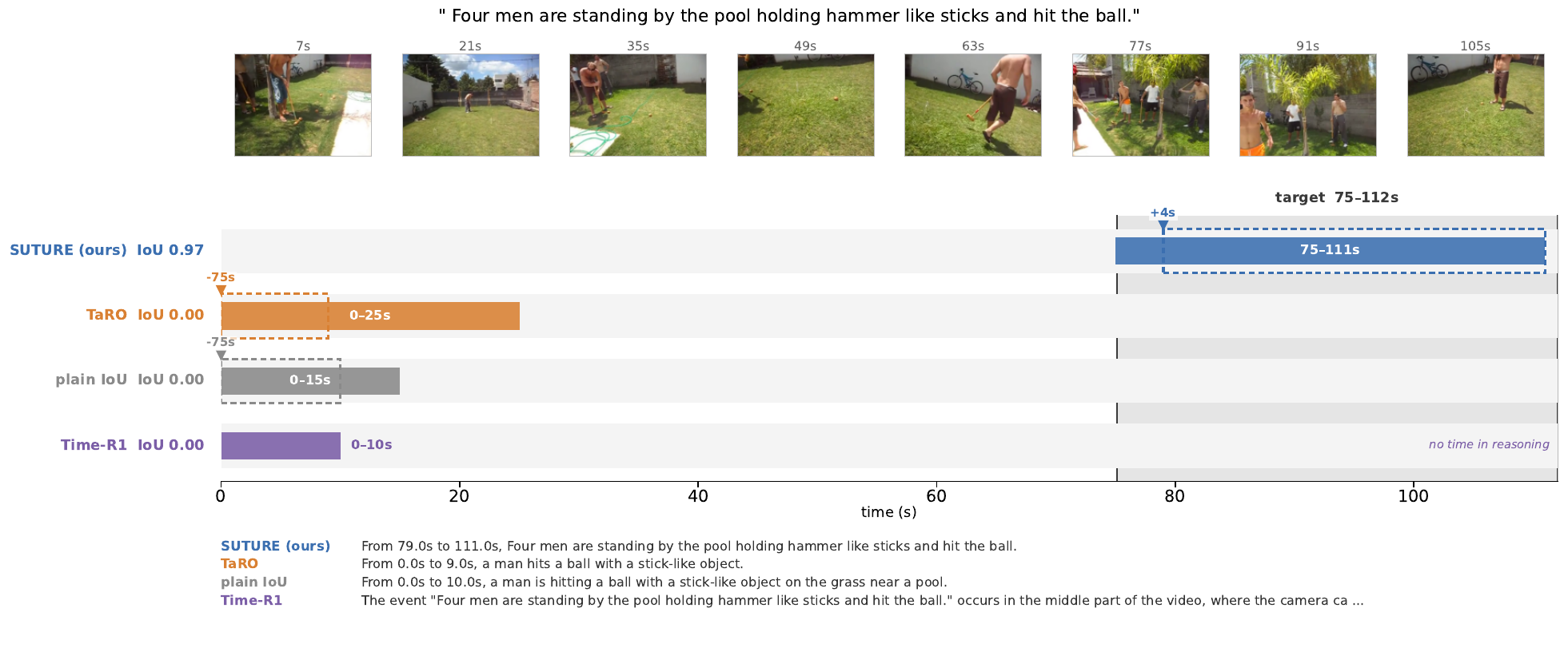}
        \caption{\emph{``Four men are standing by the pool holding hammer like sticks and hit the ball.''} The ground truth interval is 75--112s of a 112s video.}
        \label{fig:qualitative-onset-pool}
    \end{subfigure}

    \vspace{0.8em}
    \begin{subfigure}[t]{\linewidth}
        \centering
        \includegraphics[width=\linewidth]{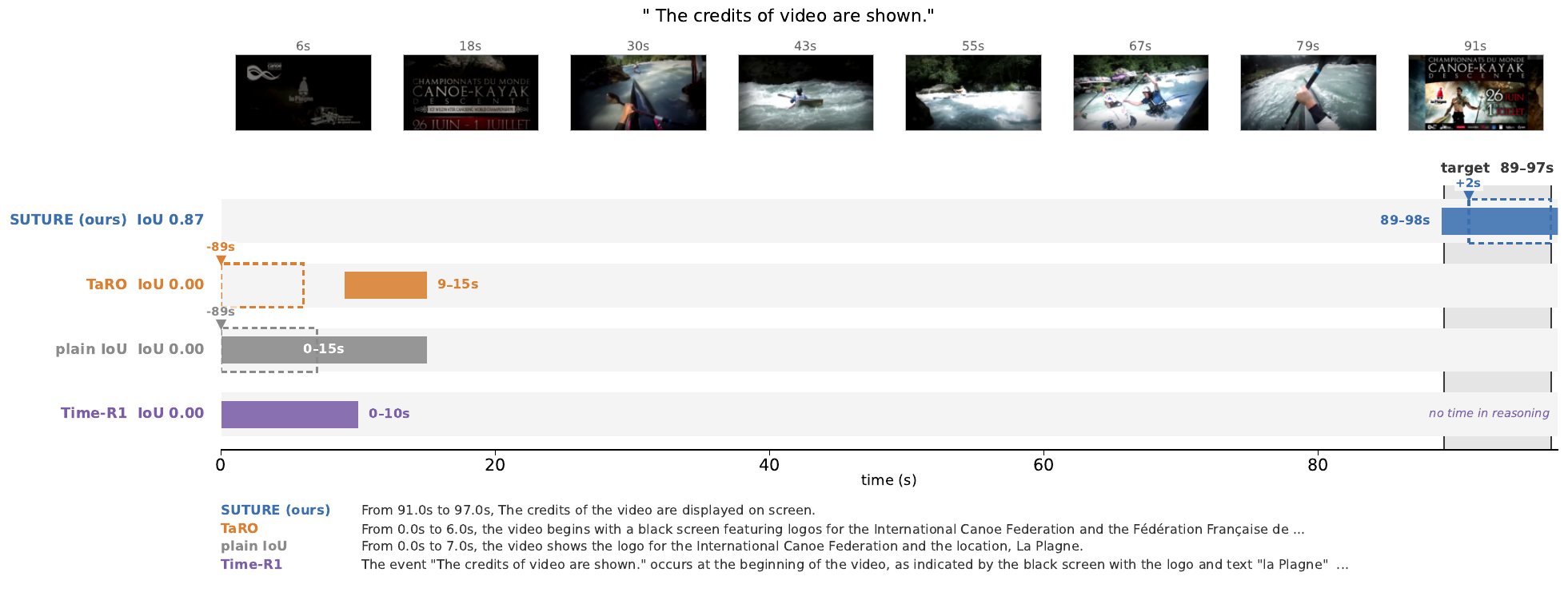}
        \caption{\emph{``The credits of video are shown.''} The ground truth interval is 89--97s of a 98s video, and a similar title card appears at the start of the video.}
        \label{fig:qualitative-onset-credits}
    \end{subfigure}
    \caption{\textbf{Tracking event start times during reasoning on ActivityNet (I).} Dashed
    outlines mark first temporal mentions and solid bars mark final answers. In both
    examples \method{} opens its reasoning inside the target interval, while TaRO and
    plain IoU open near the video start.}
    \label{fig:qualitative-onset-1}
\end{figure}

\clearpage
\paragraph{Interpretation.}
The examples visualize the same pattern of early first-mention start times seen in the aggregate
analysis. In the first pair, \method{} moves its initial temporal hypothesis to
the target interval later in the video while TaRO and plain IoU remain near the video start.
The second pair also shows that the location of the first mention and the final answer need
not move together: comparison policies can partially recover in the final
prediction despite opening their temporal reasoning substantially earlier.

\begin{figure}[H]
    \centering
    \begin{subfigure}[t]{\linewidth}
        \centering
        \includegraphics[width=\linewidth]{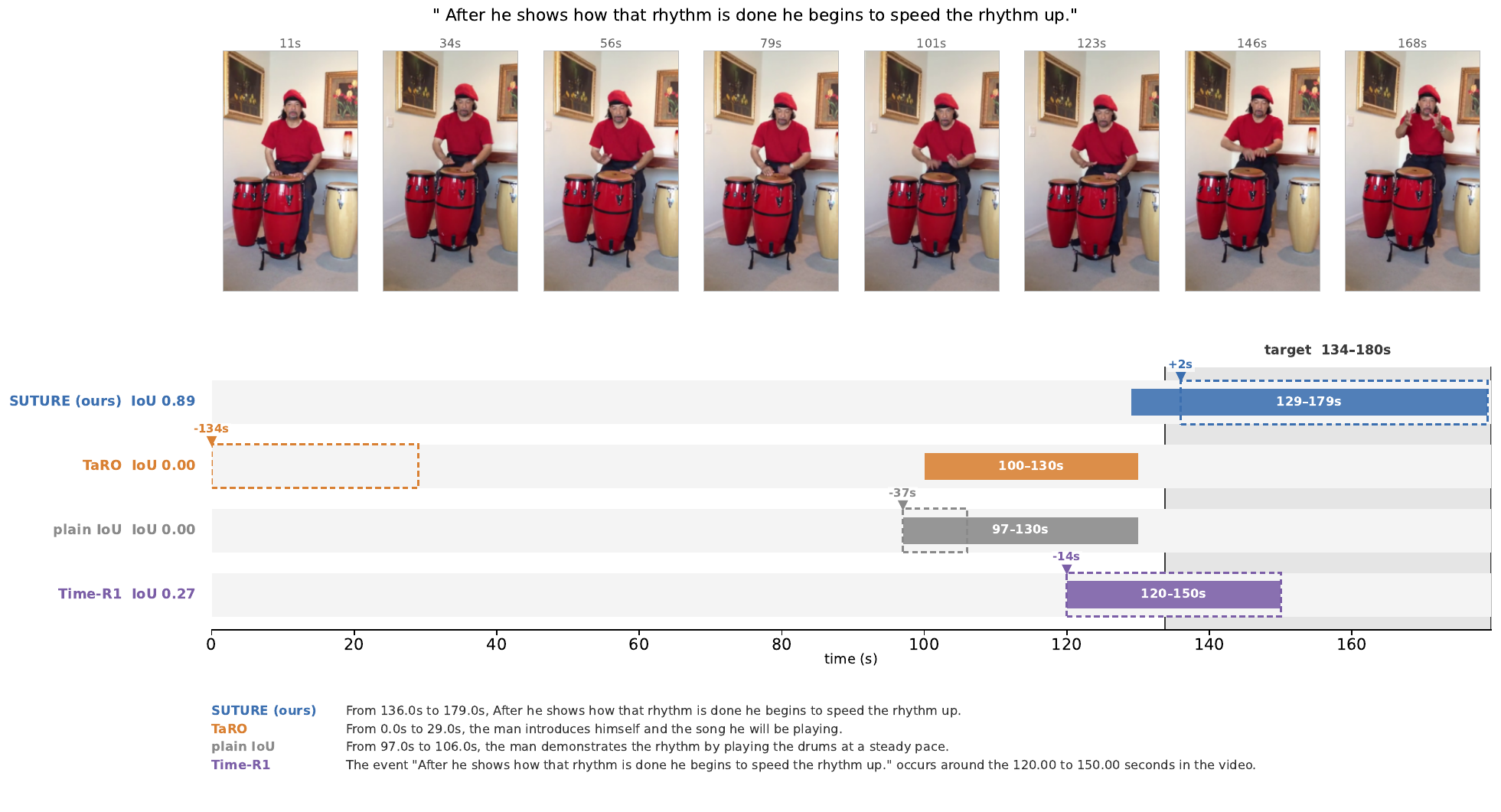}
        \caption{\emph{``After he shows how that rhythm is done he begins to speed the rhythm up.''} The ground truth interval is 134--180s of a 180s video.}
        \label{fig:qualitative-onset-drums}
    \end{subfigure}

    \vspace{0.8em}
    \begin{subfigure}[t]{\linewidth}
        \centering
        \includegraphics[width=\linewidth]{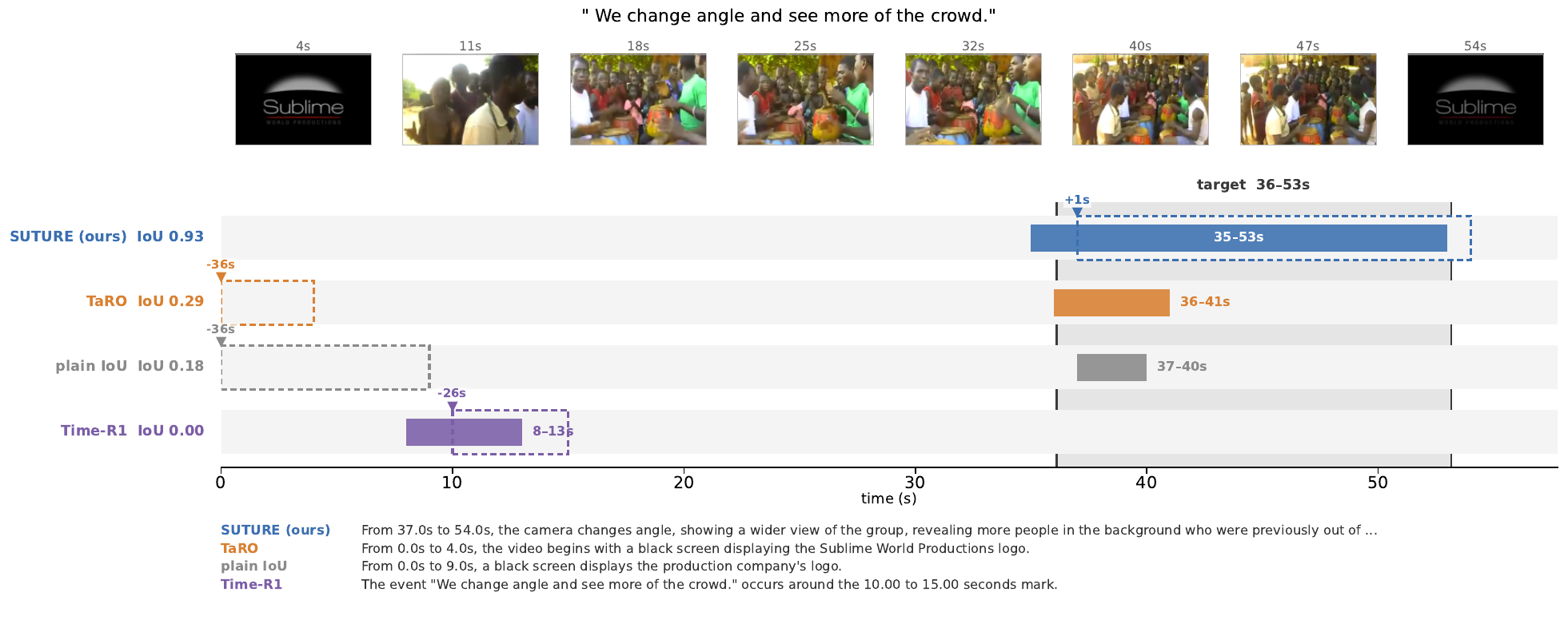}
        \caption{\emph{``We change angle and see more of the crowd.''} The ground truth interval is 36--53s of a 58s video.}
        \label{fig:qualitative-onset-crowd}
    \end{subfigure}
    \caption{\textbf{Tracking event start times during reasoning on ActivityNet (II).} These
    examples show cases in which comparison policies partially recover in the final
    answer despite first mentioning substantially earlier temporal regions.}
    \label{fig:qualitative-onset-2}
\end{figure}

\clearpage
\endgroup

\end{document}